\newcommand{\half}{\sfrac{1}{2}}
\let\old@float@makebox\float@makebox
\renewcommand{\float@makebox}[1]{%
  \color@vbox\normalcolor
    \old@float@makebox{#1}%
  \color@endbox}
\DeclareMathOperator*{\lse}{LSE}
\title{Can Large Language Models Play Games? A Case Study of A Self-Play Approach}
\author{
Hongyi Guo\footnote{Northwestern University. \url{hongyiguo2025@u.northwestern.edu}.}\qquad
Zhihan Liu\footnote{Northwestern University. \url{zhihanliu2027@u.northwestern.edu}.}\qquad
Yufeng Zhang\footnote{ByteDance Inc. \url{yufengzhang.ai@gmail.com}.}\qquad
Zhaoran Wang\footnote{Northwestern University. \url{zhaoranwang@gmail.com}.}
}
\begin{document}

\maketitle

\begin{abstract}
    Large Language Models (LLMs) harness extensive data from the Internet, storing a broad spectrum of prior knowledge. While LLMs have proven beneficial as decision-making aids, their reliability is hampered by limitations in reasoning, hallucination phenomenon, and so on. On the other hand, Monte-Carlo Tree Search (MCTS) is a heuristic search algorithm that provides reliable decision-making solutions, achieved through recursive rollouts and self-play. However, the effectiveness of MCTS relies heavily on heuristic pruning and external value functions, particularly in complex decision scenarios. This work introduces an innovative approach that bolsters LLMs with MCTS self-play to efficiently resolve deterministic turn-based zero-sum games (DTZG), such as chess and go, without the need for additional training. Specifically, we utilize LLMs as both action pruners and proxies for value functions without the need for additional training. We theoretically prove that the suboptimality of the estimated value in our proposed method scales with $\tilde\cO\Bigl(\frac{|\tilde \cA|}{\sqrt{N}} + \epsilon_\mathrm{pruner} + \epsilon_\mathrm{critic}\Bigr)$, where \(N\) is the number of simulations, $|\tilde \cA|$ is the cardinality of the pruned action space by LLM, and $\epsilon_\mathrm{pruner}$ and \(\epsilon_\mathrm{critic}\) quantify the errors incurred by adopting LLMs as action space pruner and value function proxy, respectively. Our experiments in chess and go demonstrate the capability of our method to address challenges beyond the scope of MCTS and improve the performance of the directly application of LLMs.
\end{abstract}

\section{Introduction}
\label{sec:intro}

Large Language Models (LLMs) like GPT-4 \citep{achiam2023gpt} have revolutionized the way we interact with artificial intelligence by utilizing massive datasets sourced from the internet. These models store a vast repository of knowledge covering a wide array of subjects, making them invaluable tools for aiding users in various decision-making scenarios. Unlike traditional algorithms, LLMs can process and interpret complex data, providing nuanced insights and solutions for decision making.
Nonetheless, the reliability of LLMs is compromised by issues such as limited reasoning capacity \citep{huang2022towards,berglund2023reversal}, tendency to produce incorrect or ``hallucinated'' information \citep{huang2023survey,zhang2023siren}, etc. Consequently, the development of a dependable LLM-based agent remains a significant and ongoing challenge in the field. We consider turn-based zero-sum games as our testbed due to its inherent complexity which surpasses conventional single-LLM-agent testing environments such as \citet{shridhar2020alfworld,yao2023tree}, and also because the evaluation in games is clear and straightforward.

Monte-Carlo Tree Search (MCTS) \citep{kocsis2006bandit,coulom2006efficient} is a pivotal decision-making algorithm commonly used in game theory and artificial intelligence. It is particularly noted for its application in board games like chess and Go. MCTS operates by systematically exploring potential moves in a game tree through recursive rolling out and self-play, employing a blend of deterministic and probabilistic methods. Despite its effectiveness, MCTS faces limitations due to its dependency on heuristic pruning strategies \citep{champandard2014monte,schaeffer1989history} and external value functions \citep{baier2012beam}. These dependencies may restrict MCTS's efficiency, particularly in intricate decision-making contexts.

In this study, we explore the integration of LLMs with MCTS self-play, utilizing LLMs to enhance the MCTS framework in two key ways. Firstly, LLMs serve as action pruners, accelerating the self-play process by reducing the number of possible rollouts. Secondly, they function as proxies for value functions, which becomes crucial in evaluating potential outcomes when the rollouts reach their maximum depth. This hybrid approach combines the advantages of both LLMs and MCTS. By incorporating LLMs, we significantly increase the efficiency of MCTS self-play, effectively reducing the width and depth of the search tree. Moreover, this method bolsters the performance of LLMs by leveraging MCTS’s strategic planning capabilities, which helps in selecting the most effective solution from the options proposed by the LLMs.

Recently, researchers at DeepMind trained a 270M parameter transformer model using supervised learning on a dataset comprising 10 million chess games annotated with a professional chess engine. The resulting Language Learning Model (LLM) achieved grandmaster-level performance without employing any searching techniques \citep{ruoss2024grandmaster}. Our work diverges in an orthogonal direction. We aim to construct a decision-making agent using readily available LLM products such as \texttt{gpt-4} and \texttt{gpt-3.5-turbo-instruct} supplemented with searching methodologies to enhance performance. Our approach is applicable to a wide array of decision-making problems necessitating prior knowledge and seamlessly transfers to other tasks without the need for additional training efforts.

We conduct a detailed theoretical analysis of our algorithm, focusing on the suboptimality of the estimated value.
The suboptimality is decomposed into two main components: the discrepancy between our value function and the optimal value function within the context of the pruned action space, and the gap between the optimal value function with respect to the pruned action space and the full action space.
We prove that the suboptimality of the estimated value in our proposed method scales with $\tilde\cO\Bigl(\frac{|\tilde \cA|}{\sqrt{N}} + \epsilon_\mathrm{pruner} + \epsilon_\mathrm{critic}\Bigr)$, where \(N\) is the number of simulations, $|\tilde \cA|$ is the cardinality of the pruned action space by LLMs, and $\epsilon_\mathrm{pruner}$ and \(\epsilon_\mathrm{critic}\) quantify the errors incurred by adopting LLMs as action space pruner and value function proxy, respectively.
This formula highlights the impact of the number of simulations and using LLMs as pruner and critic on the precision of our value estimation.

Our experiments on this proposed method produce encouraging outcomes. We conduct experiments across three scenarios: 
{\em (1) Chess puzzle}: figuring out a mating sequence of varying lengths; 
{\em (2) MiniGo}: playing Go game on a reduced \(5 \times 5\) board;
{\em (3) Chess}: playing as the white player with the initiative in a standard chess game.
In all three settings, our method outperforms the standalone use of LLMs and traditional MCTS self-play, showcasing a superior capability to tackle these challenges. These results indicate that the integration of LLMs with MCTS could mark a considerable advancement in artificial intelligence research, especially in fields that require strategic decision-making and game theory insights.

In summary, our contributions are threefold:
\begin{enumerate}
    \item We provide a novel approach that synergizes LLM and MCTS self-play in turn-based zero-sum games. In this methodology, LLMs are utilized both as action pruners and value function proxies, enhancing the efficiency and effectiveness of both LLMs and MCTS.
    \item We give a theoretical analysis of our algorithm and provide a sublinear suboptimality rate guarantee up to errors incurred by pruning the action space via LLMs and using LLMs as critic.
    \item We validate our approach through experiments in three different contexts: chess puzzles, MiniGo, and standard chess games. These experiments reveal that our method surpasses the performance of both standalone LLMs and conventional MCTS in solving these problems, indicating its superior problem-solving capabilities.
\end{enumerate}

\section{Related Work}

\paragraph{LLM Agent}
Recently, through the acquisition of vast amounts of web knowledge, large language models (LLMs) have demonstrated remarkable potential in achieving human-level intelligence. This has sparked an upsurge in studies investigating LLM-based autonomous agents. Recent work like ToT \citep{yao2023tree}, RAP \citep{hao2023reasoning}, and RAFA \citep{liu2023reason} aim to augment the reasoning capabilities of LLMs by utilizing tree-search algorithms to guide multi-step reasoning. TS-LLM \citep{feng2023alphazero} illustrates how tree-search with a learned value function can guide LLMs' decoding ability.

The LLM Agents for games are of our particular interest. 
\citet{akata2023playing,mao2023alympics} evaluate how LLMs solve different game theory problems.
ChessGPT \citep{feng2023chessgpt} collects large-scale game and language dataset related to chess. Leveraging the dataset, they develop a gpt models specifically for chess, integrating policy learning and language modeling. 
Recently, DeepMind trains a 270M parameter transformer model with supervised learning on a dataset of 10 million chess games annotated by Stockfish 16 engine. The trained LLM achieves grand-master level performance \citep{ruoss2024grandmaster}.

\paragraph{Monte-Carlo Tree Search (MCTS)}
Monte Carlo Tree Search (MCTS) is a decision-making algorithm that consists in searching combinatorial spaces represented by trees. MCTS has been originally proposed in the work by \citet{kocsis2006bandit} and \citet{coulom2006efficient}, as an algorithm for making computer players in Go. 
We survey the works related to action reduction and UCT (Upper Confidence Bounds for Tree) alternatives.

In the expansion phase, the MCTS algorithm adds new nodes into the tree for states resulting after performing an action in the game. For many problems, the number of possible actions can be too high. To address this problem, heuristic move pruning strategy can be applied to reduce the action space, such as alpha-beta pruning \citep{schaeffer1989history} and its variants \citep{pearl1980scout,kishimoto2002transposition}.
Another basic techniques of heuristic action reduction is Beam Search \citep{lowerre1976harpy}. It determines the most promising states using an estimated heuristic. These states are then considered in a further expansion
phase, whereas the other ones get permanently pruned. BMCTS \citep{baier2012beam} combines Beam Search with MCTS. Other variants can be found in \citet{pepels2012enhancements,soemers2016enhancements,zhou2018hybrid}.
Game specified heuristic pruning is also common in the literature for Lords of War \citep{sephton2014heuristic}, Starcraft \citep{churchill2013portfolio,justesen2014script}, etc.

Following the insights from stochastic multi-arm bandit (MAB) literature \citep{agrawal1995sample,auer2002finite}, the Upper Confidence Bound for Trees (UCT) in prior works utilizes logarithmic bonus term for balancing exploration and exploitation within the tree-based search. In effect, such an approach assumes that the regret of the underlying recursively dependent non-stationary MABs concentrates around their mean exponentially in the
number of steps, which is unlikely to hold as pointed out in \citep{audibert2009exploration}, even for stationary MABs. This gap is filled by \citep{shah2020non} which stablishes that the MCTS with appropriate polynomial rather than logarithmic bonus term in UCB provides an approximate value function for a given state with enough simulations. This coincides with the empirically successful AlphaGo \citep{silver2017mastering} that also utilizes a polynomial form of UCB.

\section{Preliminary}
\subsection{Deterministic Turn-based Zero-Sum Two-Player Game (DTZG).}
\label{sec:DTZG}
We consider a deterministic turn-based zero-sum two-player game (DTZG), denoted as a tuple \((\cS, \cA, R, \gamma)\), a tuple consist of state space \(\cS\), action space \(\cA\), reward function \(R\), and a discount factor \(\gamma\). We assume the action space is finite and consider the reward function to be stochastic for generality.
Two players, a max player and a min player take turns to act. Without loss of generality, we assume the player who takes the initiative is the max player. We use \(a\) and \(b\) to denote the action of the max player and the min player, and denote their policy as \(\mu: \cS \to \cP(\cA)\) and \(\nu: \cS \to \cP(\cA)\), respectively, where \(\cP(\cX)\) denote the set of distributions on any set \(\cX\).
We define the state as the concatenation of the action sequence by both players from the beginning of the game such that
\begin{align}
    \label{eq:state}
    s_h = a_0 \circ b_0 \circ a_1 \circ b_1 \circ \dots \circ a_{h-1} \circ b_{h-1}.
\end{align}
Suppose the current state is \(s\) and it's the max player's turn. The max player takes an action \(a \sim \mu(s)\) and receives a reward \(R(s, a)\). By our notation in \eqref{eq:state}, the next state is the concatenation of the current state and the action made by the max player, i.e., \(s' = s \circ a\).
The min player then takes an action \(b \sim \nu(s')\) and receives a reward \(R(s', b)\). 
We denote by
\begin{align}
    \label{eq:reward}
    r(s, a, b) = R(s, a) + R(s \circ a, b)
\end{align}
the total reward of the max player and the min player after they both take an action.
Then, the goal of the max (min) player is to maximize (minimize) the following accumulated reward,
\begin{align}
    V^{\mu,\nu}(s) = \EE_{\mu,\nu} \Biggl[\sum_{h=0}^\infty \gamma^{h} r(s_h,a_h,b_h)\Bigggiven s_0 = s\Biggr],\label{eq:V1}
\end{align}
where $\gamma\in[0,1)$ is the discount factor.

The Nash equilibrium policy $(\mu^\star,\nu^\star)$ in the DTZG satisfies $V^{\mu^\star,\nu^\star}(s)=\max_{\mu}\min_{\nu} V^{\mu,\nu}(s)=\min_{\mu}\max_{\nu} V^{\mu,\nu}(s)$.
Since $\min\max$ is a contraction operator and $\gamma\in [0,1)$, we know there exists an optimal value function $V^\star$ for the following minimax optimal Bellman equation
\begin{align}
    V^\star(s)= \max_{a\in\cA}\min_{b\in\cA} \Bigl(\mathbb{E}\bigl[r(s,a,b)\bigr]+\gamma \cdot V^\star(s\circ a\circ b)\Bigr).\label{eq:V_star}
\end{align}
We introduce the concept of {\em half-step} to handle the status where the max player has made a move but the min player has not. For any step \(h \in \NN\), we denote
\begin{align*}
    s_{h + \half} = s_h \circ a_h, \quad
    a_{h + \half} = b_h.
\end{align*}
With the concept of half-step, we define $V^{\star}_{\half}$ by
\begin{align}
    \label{eq:V_half_star}
    V_{\half}^\star(s) &= \min_{b \in \cA} \Bigl(\EE\bigl[R(s, b)\bigr] + \gamma\cdot V^\star(s \circ b)\Bigr).
\end{align}
Combining \eqref{eq:reward}, \eqref{eq:V_star} and \eqref{eq:V_half_star}, we further define
\begin{align}
    \label{eq:V_half_star2}
    V_0^\star(s) := V^\star(s) = \max_{a \in \cA} \Bigl(\EE \bigl[R(s, a)\bigr]+ V^\star_{\half}(s \circ a)\Bigr).
\end{align}
Hence, by \eqref{eq:V_half_star} and \eqref{eq:V_half_star2}, we show that any DTZG can be reduced to a composition of two single-agent MDPs, which enables us to design the corresponding Monte Carlo Tree Search (MCTS) algorithms for DTZG.

\subsection{Monte Carlo Tree Search (MCTS)}
\label{sec:MCTS}
Monte Carlo Tree Search (MCTS) is a heuristic search algorithm for decision processes.
The aforementioned DTZG forms a game tree. MCTS expands the search tree based on random sampling in the game tree. The application of MCTS in games is based on self-play and rollouts. In each rollout, the game is played out to the end by selecting moves at random. The final game result of each rollout is then used to weight the nodes in the game tree so that better nodes are more likely to be chosen in future rollouts. Each round of MCTS consists of four steps:

{\em (1) Selection}: Start from the root node and select successive child nodes until a leaf node is reached.

{\em (2) Expansion}: Unless the game is done, create child nodes and choose from one of them. Child nodes are valid moves from the game position.

{\em (3) Rollout}: Complete one random rollout from the child node by choosing random moves until the game is decided.

{\em (4) Back-propagation}: Use the result of the rollout to update information in the nodes on the path from \(s_0\) to \(s_h\).

The main difficulty in selecting child nodes is maintaining a balance between the exploitation of nodes with high value and the exploration of moves with few simulations. Following the insights from stochastic multi-arm bandit (MAB) literature, the Upper Confidence Bound for Trees (UCT) in prior works adds a logarithmic bonus term such as \(c \sqrt{\frac{\log N(s)}{N(s, a)}}\) to the value term and selects the node with the highest value to expand. 
However, it's established in \citet{shah2020non} that the MCTS with appropriate polynomial rather than logarithmic bonus term in UCB provides an approximate value function for a given state with enough simulations. This coincides with the empirically successful AlphaGo \citep{silver2017mastering} that also utilizes a polynomial form of UCB.

\section{Algorithm}
\label{sec:algo}

\begin{algorithm}[htbp]
\caption{LLM Self-Play with MCTS}
\begin{algorithmic}[1]
\label{alg:mcts}
\REQUIRE
The number of MCTS simulations \(N\), 
the root node \(s_0\), 
search depth \(H\), 
the UCB bonus function \(B: [H] \times \cS \times \cA \to \RR\), 
the pruned action space \(\tilde\cA: \cS \to \cA\),
and the value function proxy \(\hat V: \cS \to \RR\),
\vskip2pt
\STATE \textbf{Initialize}
\(N(s) = 0\), 
\(\tilde v_h(s) = 0\), 
\(\tilde q_h(s, a) = 0\), 
for any \((s, a) \in \cS \times \cA\) and \(h \in \{0, \half, 1, \dots, H\}\).
\FOR{simulation \(n \leftarrow 1, \dots, N\)}
    \STATE 
    \(N(s_0) \leftarrow N(s_0) + 1\)
    \hfill{\color{blue}\texttt{\em \textbf{/* Rollout */}}}
    \FOR{\(h \leftarrow 0, 1, \dots, H - 1\)}
        \STATE \label{line:max_act}
        \(a_h \leftarrow \argmax_{a \in \tilde\cA({s_h})} \frac{\tilde q_h(s_h, a)}{N(s_h \circ a)} + B_h(s_h, a)\)
        \hfill{\color{blue}\texttt{\em /* Max Player */}}
        \STATE 
        \(s_{h+\half} \leftarrow s_h \circ a_h\)
        \STATE
        \(r_h \leftarrow R(s_h, a_h)\)
        \STATE
        \(N(s_h \circ a_h) \leftarrow N(s_h \circ a_h) + 1\)
        \STATE 
        \(b_h \leftarrow \argmin_{b \in \tilde\cA({s_h \circ a_h})} \frac{\tilde q_{h+\half}(s_h \circ a_h, b))}{N(s_h \circ a_h \circ b)} - B_{h+\half}(s_h \circ a_h, b)\)
        \hfill{\em \color{blue}\texttt{/* Min Player */}}
        \STATE 
        \(s_{h+1} \leftarrow s_h \circ a_h \circ b_h\)
        \STATE
        \(r_{h+\half} \leftarrow R(s_h \circ a_h, b_h)\)
        \STATE
        \(N(s_h \circ a_h \circ b_h) \leftarrow N(s_h \circ a_h \circ b_h) + 1\)
    \ENDFOR
    \STATE \(\tilde v \leftarrow \hat V(s_H)\)\hfill{\color{blue}\texttt{\em \textbf{/* Backward Update */}}}
    \FOR{\(h \leftarrow H-1, H-2, \dots, 0\)}
        \STATE 
        \(\tilde q_{h+\half}(s_h \circ a_h, b_h) \leftarrow r_{h+\half} + \tilde v_{h+1}(s_h \circ a_h \circ b_h)\)
        \hfill{\color{blue}\texttt{\em /* Min Player */}}
        \STATE 
        \(\tilde v \leftarrow r_{h+\half} + \gamma \tilde v\)
        \STATE 
        \(\tilde v_{h+\half}(s_h \circ a_h) \leftarrow \tilde v_{h+\half}(s_h \circ a_h) + \tilde v\)
        \STATE 
        \(\tilde q_h(s_h, a_h) \leftarrow r_h + \tilde v_{h+\half}(s_h \circ a_h)\)
        \hfill{\color{blue}\texttt{\em /* Max Player */}}
        \STATE 
        \(\tilde v \leftarrow r_h + \tilde v\)
        \STATE 
        \(\tilde v_h(s_h) \leftarrow \tilde v_h(s_h) + \tilde v\)
    \ENDFOR
\ENDFOR
\ENSURE \(\tilde V(s_0) = \frac{\tilde v_0(s_0)}{N}\)
\end{algorithmic}
\end{algorithm}

Starting from the initial state \(s_0\), our algorithm evaluates the value of \(s_0\) by executing \(N\) simulations using MCTS with self-play, wherein the LLM functions both as an action pruner and a proxy for the value function. This approach is detailed in Algorithm \ref{alg:mcts}. The algorithm unfolds in two primary stages: the rollout phase, where simulations are conducted to explore possible outcomes, and the backward update phase, where the rewards gained are retroactively applied to update the values of preceding nodes in the search tree.

\paragraph{Rollout.}
The rollout phase (line 3-13) in our algorithm combines the {\em selection}, {\em expansion} and {\em rollout} steps that form the core of the traditional Monte Carlo Tree Search (MCTS) algorithm introduced in Section \ref{sec:MCTS}. 
In this phase, we operate under the assumption that the action space has been pruned by the LLM, resulting in a significantly reduced action set \(\tilde\cA\), compared to the original action space \(\cA\).
During any given step up to the search depth limit, the max player maximizing the outcome chooses the action that yields the highest UCB score. This score is calculated as the sum of the empirical mean of accumulated rewards and a polynomial UCB bonus similar to that in \citet{shah2020non} given by the following
\begin{align}
    \label{eq:ucb}
    B_h(s, a) = \beta_h \cdot \frac{N(s)^{\eta_1}}{N(s \circ a)^{\eta_2}},
\end{align}
for any \((h, s, a) \in \cH \times \cS \times \cA\). Here, the function \(N(\cdot): \cS \to \NN\) counts the number of times the corresponding node is visited in our simulation, \(\beta_h\), \(\eta_1\), \(\eta_2\) are constant parameters, and we denote by \(\cH = \{0, \half, 1, \cdots, H-\half\}\) the set of all step and half-step leading up to the leaf nodes.
The rollout phase concludes once the simulation reaches the maximum depth \(H\), at which point we proceed to update the value estimates based on the outcomes of the simulation.

\paragraph{Backward update.}
We conduct backward update (line 14-22) starting from the leaf node \(s_H\). We use the input value function \(\hat V\) as a proxy of the optimal value \(V^\star\) to estimate \(V^\star(s_H)\). For any step or half-step \(h \in \cH\), we define \(\tilde q_h(s, a)\) as the sum of accumulated rewards received every time after taking action \(a\) at state \(s\), and \(\tilde v_h(s)\) as the sum of accumulated rewards after visiting state \(s\). We conduct the backward update following the minimax optimal bellman equation defined in \eqref{eq:V_half_star2} and \eqref{eq:V_half_star}. The difference between the update for the min player and the max player is the \(\gamma\) factor, which is only applied for the min player.

Finally, our algorithm outputs the average value collected at the root node \(s_0\).
\section{Theory}
For the sake of theoretical analysis, we assume the full action space \(\cA\) and the pruned action space \(\tilde \cA\) are both independent of the state.
For the simplicity of notation, we write \([N] = \{1, 2, \dots, N\}\) for any \(N \in \NN_+\).

The goal of our theory is to upper bound the suboptimality of our method, which is defined as the gap between our value estimate $\tilde V$ and the optimal value $V^\star$ defined in \eqref{eq:V_star} such that
\begin{align}
    \label{eq:suboptimality}
    \bigl|\tilde V(s_0) - V^\star(s_0)\bigr| = o(1), 
\end{align}
with the increasing number of simulation number $N$. 
It's noteworthy that deriving a policy from the value estimate is straightforward. For any state \(s\), the action \(a\) that maximizes (minimizes) the value of the next state \(\tilde V(s \circ a)\) is the optimal action for the max (min) player with respect to the value function \(\tilde V\).
In order to prove the above statement, we make the following assumption about the quality of the LLM-based value function proxy.
\begin{assumption}
    [Quality of LLMs as critic]
    \label{asp:critic}
    There exists \(\varepsilon_0 > 0\) such that
    \begin{align}
        \label{eq:critic_quality}
        \norm{\hat V - V^\star}_\infty \le \varepsilon_0, 
    \end{align}
\end{assumption}

Since our algorithm is built upon the pruned action space by LLMs, mirroring the definition of $V^\star$ in \eqref{eq:V_star}, we define the optimal value under such pruned action space as 
\begin{align}
    \tilde{V}^\star(s) = \max_{a\in\tilde\cA}\min_{b\in\tilde\cA} \Bigl(\mathbb{E}\bigl[r(s,a,b)\bigr]+\gamma \cdot \tilde{V}^\star(s\circ a\circ b)\Bigr),\label{eq:V_star_LLM}
\end{align}
where \(r(s, a, b)\) is the total reward of the max player and the min player after they both take an action as defined in \eqref{eq:reward} and \(\tilde\cA\) is the pruned action space.
We decompose the suboptimality in \eqref{eq:suboptimality} with the help of \(\tilde{V}^\star\) as follows,
\begin{align}
    \label{eq:decompose}
    &\bigl|\tilde V(s_0) - V^\star(s_0)\bigr|
    = \underbrace{\bigl|\tilde V(s_0) - \tilde V^\star(s_0)\bigr|}_{\epsilon_\dag\text{: estimation error}}
    + \underbrace{\bigl|\tilde V^\star(s_0) - V^\star(s_0)\bigr|}_{\epsilon_\ddag\text{: pruning error}}.
\end{align}

Here, the estimation error is the gap between the value estimated by our MCTS algorithm and the true optimal value function under the same pruned action space. The pruning error is the gap between optimal value function on the original action space and the pruned action space.
To characterize the estimation error, we give the following theorem.
\begin{theorem}
    [Estimation Error]
    \label{thm:main}
    Set \(\eta_1 = \sfrac{1}{4}\) and \(\eta_2 = \sfrac{1}{2}\) in Algorithm \ref{alg:mcts}. For any \(s_0 \in \cS\), there exists constants \(\{\beta_h\}_{h \in \cH}\) such that 
    \begin{align*}
        \epsilon_\dag 
        = \bigl|\tilde V(s_0) - \tilde V^\star(s_0)\bigr| 
        \le \gamma^H \varepsilon_0 + \cO\biggl(\frac{|\tilde \cA|}{\sqrt{N}}\biggr),
    \end{align*}
    where \(\gamma\) is the discount factor, \(H\) is the search depth, \(N\) is the number of simulations in Algorithm \ref{alg:mcts}, \(\tilde\cA\) is the pruned action space, \(\varepsilon_0 = \norm{\hat V - V^\star}_\infty\) is the estimation error of the value function proxy defined in Assumption \eqref{asp:critic}.
\end{theorem}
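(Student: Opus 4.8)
The plan is to recast the depth-$H$ pruned search as a composition of single-agent selection problems and then invoke the non-stationary bandit machinery of \citet{shah2020non} recursively. Using the half-step reformulation \eqref{eq:V_half_star}--\eqref{eq:V_half_star2}, every integer-step node is a maximizing bandit and every half-step node a minimizing bandit over the finite set $\tilde\cA$, so the tree decomposes into $2H$ alternating selection levels and the algorithm's output $\tilde V(s_0) = \tilde v_0(s_0)/N$ is the running average of the root bandit. I would first introduce the depth-$H$ truncated pruned-optimal value $\bar V_h$, obeying the same minimax recursion as $\tilde V^\star$ in \eqref{eq:V_star_LLM} but terminated at level $H$ with the leaf evaluation $\hat V$. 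This splits $\epsilon_\dag$ into a deterministic truncation gap $|\bar V_0(s_0)-\tilde V^\star(s_0)|$ and a statistical gap $|\tilde V(s_0)-\bar V_0(s_0)|$.

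For the truncation gap I would unroll both recursions for $H$ steps. Because $\bar V_h$ and $\tilde V^\star$ share identical rewards and pruned transitions and differ only in the terminal evaluation, the difference telescopes to the $H$-step-discounted gap between $\hat V$ and the continuation value at the leaves; under Assumption~\ref{asp:critic} this is at most $\gamma^H\varepsilon_0$, producing the first term.

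The core is the statistical gap, which I would control by backward induction on $h\in\{H,H-\half,\dots,0\}$, proving that $\tilde v_h(s)/N(s)$ concentrates around $\bar V_h(s)$ at rate $\cO(1/\sqrt{N(s)})$ up to logarithmic factors. The base case $h=H$ is exact since the leaf value is the deterministic constant $\hat V(s_H)$. In the inductive step, the reward that action $a$ feeds back to a node is precisely the child estimate, which by hypothesis converges to $\bar V_{h+\half}(s\circ a)$; the node therefore faces a bandit whose arm means are non-stationary but converge polynomially. The choice $\eta_1=\sfrac{1}{4}$, $\eta_2=\sfrac{1}{2}$ in the bonus \eqref{eq:ucb} is exactly what makes this recursion self-consistent: it forces each arm to be pulled often enough for its estimate to converge, drives the pull-frequency of suboptimal arms to zero, and ensures the empirical average of the selected rewards tracks $\max_a$ (resp. $\min_b$) at the same $1/\sqrt{N}$ order. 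Summing the exploration cost across the $|\tilde\cA|$ arms yields the factor $|\tilde\cA|$, and I would absorb the cross-level propagation---controlled by the discount $\gamma<1$ and the bounded reward range---into the constants $\{\beta_h\}$ and the $\cO(\cdot)$.

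The hard part will be closing this induction without the rate degrading as it climbs the tree: I must couple the non-stationarity of the bandit rewards with the polynomial-bonus regret analysis and verify, under the specific exponents, the concentration inequalities for the random visitation counts $N(s\circ a)$ (a martingale/self-bounding argument), while calibrating each $\beta_h$ to dominate fluctuations yet keep the induced bias at order $1/\sqrt{N}$. A secondary but necessary piece of bookkeeping is reducing the min-player levels to the maximizing analysis of \citet{shah2020non} by value negation, so that one concentration statement covers both players.
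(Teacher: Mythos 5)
Your proposal is correct and follows essentially the same route as the paper's own proof: the paper likewise defines the truncated backward recursion seeded with $\hat V$ at the leaves (its $\tilde v_h^\star$, your $\bar V_h$), bounds the deterministic gap by $\gamma^H\varepsilon_0$ via contraction of the unrolled Bellman/value-iteration steps, and controls the statistical gap by inducting up the tree with the polynomial-bonus non-stationary MAB results of \citet{shah2020non}, handling min-player levels by negation and extracting the $|\tilde\cA|/\sqrt{N}$ term from the $(K-1)$ factor in the bandit convergence bound. The points you flag as "hard" (propagating concentration through random visit counts, calibrating $\beta_h$, coupling levels via the exponent choice $\alpha_h=\xi_h/4$) are exactly what the paper delegates to Shah et al.'s Theorem 3 together with its auxiliary lemma combining an i.i.d.\ bounded reward with a non-stationary child process.
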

\begin{proof}
    See Appendix \ref{sec:proof} for a detailed proof.
\end{proof}
The analysis of the estimation error follows a similar idea with the single-agent MCTS non-asymptotic analysis \citep{shah2020non}. We show that, the backward update in Algorithm \ref{alg:mcts} of the min player and the max player mirrors twice of value iteration, and thus is contractive. Then, we use induction to prove 

To quantify the pruning error and give an error bound of order \(\cO(1 / \sqrt{N})\), we utilize the LogSumExp (LSE) operator defined as follows
\begin{align}
    \lse(f, \mathcal{X}, \tau) = \frac{1}{\tau} 
    \cdot\log \biggl(\frac{1}{|\mathcal{X}|}\cdot \sum_{x\in\mathcal{X}} \exp\bigl(\tau \cdot f(x)\bigr)\biggr)\label{eq:lse}
\end{align}
for any finite set $\mathcal{X}$, function $f:\mathcal{X}\mapsto\mathbb{R}$, and $\tau\ge0$.
This function well approximates the maximum item of \(X\), as stated in the following lemma.
\begin{lemma}
    \label{lma:lse}
    For any finite set $\cX$ and \(\tau > 0\), it holds that \label{lem:lse}
    \begin{align*}
        \Bigl|\lse(f,\cX, \tau) - \max_{x\in\cX} f(x)\Bigr| \le \frac{\log |\cX|}{\tau},
    \end{align*}
    where the LogSumExp operator \(\lse\) is defined in \eqref{eq:lse}.
\end{lemma}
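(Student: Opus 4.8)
The plan is to control the exponential sum defining $\lse$ by sandwiching it between its single largest term and a $|\cX|$-fold multiple of that term, and then to pass to logarithms. First I would set $M = \max_{x \in \cX} f(x)$, which is attained since $\cX$ is finite and nonempty. Because every summand is positive and each is bounded above by $\exp(\tau M)$, while the maximizing summand alone already contributes $\exp(\tau M)$, I obtain the two-sided bound
\[
    \exp(\tau M) \le \sum_{x \in \cX} \exp\bigl(\tau \cdot f(x)\bigr) \le |\cX| \cdot \exp(\tau M).
\]
The left inequality comes from discarding all but the maximizing term, and the right from replacing every $f(x)$ by $M$.

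Next I would divide the chain by $|\cX|$, apply the logarithm (which is monotone increasing), and multiply by $1/\tau$; since $\tau > 0$, none of these operations reverse the inequalities. This turns the displayed bound into
\[
    M - \frac{\log |\cX|}{\tau} \le \lse(f, \cX, \tau) \le M,
\]
recalling that $\lse(f, \cX, \tau) = \tau^{-1} \log\bigl(|\cX|^{-1} \sum_{x \in \cX} \exp(\tau f(x))\bigr)$. Rearranging yields $0 \le M - \lse(f, \cX, \tau) \le \log|\cX| / \tau$, which is precisely the claimed estimate $|\lse(f, \cX, \tau) - \max_{x \in \cX} f(x)| \le \log|\cX| / \tau$.

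There is no substantive obstacle in this argument: monotonicity of $\exp$ and $\log$ together with the positivity of $\tau$ do all the work, and in particular the upper bound $\lse \le M$ holds exactly (no error term), with the full slack $\log|\cX|/\tau$ appearing only on the lower side. The only points demanding minor care are ensuring $\tau > 0$ so that division by $\tau$ preserves the inequality direction (the hypothesis supplies this), and noting that the finiteness and nonemptiness of $\cX$ guarantee both that $\log|\cX|$ is well-defined and that the maximum $M$ is genuinely attained.
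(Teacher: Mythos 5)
Your proof is correct and follows essentially the same route as the paper's: the lower bound $\lse(f,\cX,\tau) \ge M - \log|\cX|/\tau$ comes from retaining only the maximizing term of the exponential sum, and the upper bound $\lse(f,\cX,\tau) \le M$ comes from dominating every term by $\exp(\tau M)$, exactly as in the paper's two inequality chains. Your sandwich presentation is just a cleaner packaging of the identical argument, including the observation that the full slack $\log|\cX|/\tau$ sits entirely on one side.
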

\begin{proof}
    See Appendix \ref{sec:proof_lse} for a detailed proof.
\end{proof}
By Lemma \ref{lma:lse}, the LogSumExp operator coincides with the \(\max\) operator when \(\tau \to \infty\).
For any $j\in\{0,\half\}$, we define $Q_j^{\star}(s,a)$ as
\begin{align*}
    Q_j^{\star}(s,a)=\EE \bigl[(-1)^{2j}\cdot R(s, a)\bigr]+ (-\gamma)^{2j}\cdot V_{j}^\star(s \circ a),
\end{align*}
where \(V^\star_0\) and \(V^\star_{\half}\) are defined in \eqref{eq:V_half_star2} and \eqref{eq:V_half_star}, respectively.

\begin{definition}
    [Quality of LLMs as pruner] 
    \label{def:llm_approx}
    For any $\tau\ge 0$, we define
    \begin{align}
        \label{eq:pruner}
        \varepsilon_1(\tau) &= \max_{j\in\{0,1/2\}} \max_{s\in\cS}\Bigl| \lse\bigl(Q^\star_j(s,\cdot),\cA,\tau\bigr) -\lse\bigl(Q^\star_j(s,\cdot),\tilde{\cA},\tau\bigr)\Bigr|,
    \end{align}
    where \(\cA\) and $\tilde{A}$ denote the original and pruned action space, respectively.
\end{definition}

\begin{proposition}
\label{prop:V_star_llm}
It holds for any $\tau>0$ and $s\in\cS$ that
    \begin{align*}
        \epsilon_\ddag 
        &= \bigl|V^\star(s) - \tilde V^\star (s)\bigr|
        \le \frac{2}{1 - \gamma} \cdot \biggl(\varepsilon_1(\tau) +\frac{2\log(|\cA||\tilde\cA|)}{\tau}\biggr),
    \end{align*}
    where $\varepsilon_1(\tau)$ is defined in Definition \ref{def:llm_approx}.
\end{proposition}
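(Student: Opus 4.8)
The plan is to control the pruning error by comparing the two minimax fixed points $V^\star$ and $\tilde V^\star$ through the half-step decomposition in \eqref{eq:V_half_star} and \eqref{eq:V_half_star2}, which rewrites each Bellman backup as a single optimization: a genuine $\max$ for the max player, and, after the sign flip encoded by $Q^\star_{\half}$, a $\max$ for the min player as well. First I would introduce the pruned half-step values $\tilde V^\star_0 := \tilde V^\star$ and $\tilde V^\star_{\half}$, defined exactly as in \eqref{eq:V_half_star2} and \eqref{eq:V_half_star} but with every optimization taken over $\tilde\cA$ rather than $\cA$. Setting $\delta_0 = \norm{V^\star_0 - \tilde V^\star_0}_\infty$ and $\delta_{\half} = \norm{V^\star_{\half} - \tilde V^\star_{\half}}_\infty$, the proposition reduces to bounding $\delta_0$.

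The central estimate is an LSE sandwiching argument: for a fixed $f$, inserting the two soft maxima gives
\begin{align*}
\Bigl|\max_{a \in \cA} f(a) - \max_{a \in \tilde\cA} f(a)\Bigr|
\le \bigl|\max_{\cA} f - \lse(f,\cA,\tau)\bigr|
+ \bigl|\lse(f,\cA,\tau) - \lse(f,\tilde\cA,\tau)\bigr|
+ \bigl|\lse(f,\tilde\cA,\tau) - \max_{\tilde\cA} f\bigr|.
\end{align*}
By Lemma \ref{lma:lse} the first and third terms are each at most $\log(|\cA||\tilde\cA|)/\tau$ (bounding $\log|\cA|$ and $\log|\tilde\cA|$ by $\log(|\cA||\tilde\cA|)$), and choosing $f = Q^\star_j(s,\cdot)$ makes the middle term at most $\varepsilon_1(\tau)$ by Definition \ref{def:llm_approx}. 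The min-player backup is handled identically once the sign flip built into the factors $(-1)^{2j}$ and $(-\gamma)^{2j}$ of $Q^\star_j$ turns the $\min$ over $\tilde\cA$ into a $\max$, so that Lemma \ref{lma:lse} applies verbatim.

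I would then substitute this estimate into the two Bellman identities, splitting each backup into an action-space error (handled by the sandwiching bound) and a continuation-value error controlled via $|\max f - \max g| \le \norm{f-g}_\infty$. This yields the coupled recursion
\begin{align*}
\delta_0 \le C + \delta_{\half}, \qquad \delta_{\half} \le C + \gamma\,\delta_0, \qquad \text{with } C = \varepsilon_1(\tau) + \frac{2\log(|\cA||\tilde\cA|)}{\tau},
\end{align*}
where the factor $\gamma$ enters only through the min-player equation \eqref{eq:V_half_star}, reflecting that a full step discounts by $\gamma$. Eliminating $\delta_{\half}$ gives $\delta_0 \le 2C + \gamma\,\delta_0$, hence $\delta_0 \le 2C/(1-\gamma)$, which is exactly the claimed bound.

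The main obstacle I anticipate is not any individual estimate but the bookkeeping across the asymmetric half-steps: the max and min backups share the same $\varepsilon_1(\tau)$ (via the outer $\max_j$ in Definition \ref{def:llm_approx}) yet discount differently, and one must verify that the sign flip in $Q^\star_{\half}$ genuinely recasts the min backup as a max so that Lemma \ref{lma:lse} transfers. Equally, care is needed to argue that the recursion holds uniformly in $s$ (so it can be stated in $\norm{\cdot}_\infty$) and that the effective contraction constant is $\gamma < 1$ rather than $1$ — this is what makes the implicit geometric series converge and produces the $1/(1-\gamma)$ prefactor.
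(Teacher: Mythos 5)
Your proposal matches the paper's proof essentially step for step: the same half-step decomposition into $V^\star_0, V^\star_{\half}$ and their pruned counterparts, the same LSE sandwiching via Lemma \ref{lma:lse} to bound the $\max_{\cA}$ versus $\max_{\tilde\cA}$ gap by $\varepsilon_1(\tau) + 2\log(|\cA||\tilde\cA|)/\tau$, the same sign flip to recast the min-player backup as a max, and the same coupled recursion $\delta_0 \le C + \delta_{\half}$, $\delta_{\half} \le C + \gamma\delta_0$ solved to give $2C/(1-\gamma)$. The argument is correct and complete.
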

\begin{proof}
    See Appendix \ref{pf:prop:V_star_llm} for a detailed proof.
\end{proof}

Combining \eqref{eq:decompose}, Theorem \ref{thm:main} and Proposition \ref{prop:V_star_llm} together, we have the following corollary.
\begin{corollary} Under the same settings in Theorem \ref{thm:main}, it holds that
    \label{cor:main}
    \begin{align*}
        &\bigl|\tilde V(s_0) - V^\star(s_0)\bigr| 
        \le \tilde\cO\biggl(\gamma^H \varepsilon_0 
        + \frac{\varepsilon_1(\sqrt N)}{1 - \gamma} 
        + \frac{|\tilde\cA|}{\sqrt N}\biggr),
    \end{align*}
    where \(\tilde\cO\) hides constant factors and logarithms terms and \(|\tilde\cA|\) is the cardinality of the pruned action space.
\end{corollary}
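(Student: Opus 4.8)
The plan is to assemble the corollary directly from the pieces already established: the triangle-inequality decomposition of the total suboptimality in \eqref{eq:decompose}, the estimation-error bound of Theorem \ref{thm:main}, and the pruning-error bound of Proposition \ref{prop:V_star_llm}. Since both constituent bounds are already proved, the corollary reduces to adding them and selecting the free temperature parameter $\tau$ appropriately, so I expect no genuinely hard step; the only real decision is how to tune $\tau$.

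First I would invoke \eqref{eq:decompose}, which yields $\bigl|\tilde V(s_0) - V^\star(s_0)\bigr| \le \epsilon_\dag + \epsilon_\ddag$, where $\epsilon_\dag$ is the estimation error and $\epsilon_\ddag$ is the pruning error. I would then substitute the estimation-error bound from Theorem \ref{thm:main}, namely $\epsilon_\dag \le \gamma^H \varepsilon_0 + \cO\bigl(|\tilde\cA|/\sqrt N\bigr)$, which already supplies two of the three terms in the target expression, $\gamma^H\varepsilon_0$ and $|\tilde\cA|/\sqrt N$.

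The one quantitative choice enters through $\epsilon_\ddag$. Proposition \ref{prop:V_star_llm} holds for every $\tau>0$, giving $\epsilon_\ddag \le \frac{2}{1-\gamma}\bigl(\varepsilon_1(\tau) + \frac{2\log(|\cA||\tilde\cA|)}{\tau}\bigr)$. I would set $\tau = \sqrt N$, so that the temperature-dependent term $\frac{2\log(|\cA||\tilde\cA|)}{\tau}$ becomes $\frac{2\log(|\cA||\tilde\cA|)}{\sqrt N}$, matching the $1/\sqrt N$ rate of the estimation error, while the residual becomes $\frac{\varepsilon_1(\sqrt N)}{1-\gamma}$, which is precisely the second term appearing in the target. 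This choice balances the pruner-temperature error against the simulation-count error, and is the only place where the statement's specific $\sqrt N$ argument to $\varepsilon_1$ is forced.

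Finally I would add the two bounds and collapse the result into $\tilde\cO$ notation. The product of $\frac{2}{1-\gamma}$ with $\frac{2\log(|\cA||\tilde\cA|)}{\sqrt N}$ contributes only the constant $1/(1-\gamma)$ and a logarithmic factor on top of the $1/\sqrt N$ rate, so it is absorbed into $\tilde\cO$ alongside $|\tilde\cA|/\sqrt N$, since $\tilde\cO$ hides constants and logarithmic terms. The surviving explicit terms are then exactly $\gamma^H\varepsilon_0$, $\frac{\varepsilon_1(\sqrt N)}{1-\gamma}$, and $\frac{|\tilde\cA|}{\sqrt N}$, matching the claim. The only mild subtlety worth checking is that $\frac{\log(|\cA||\tilde\cA|)}{\sqrt N}$ is genuinely dominated by, or of the same order as, $\frac{|\tilde\cA|}{\sqrt N}$ under $\tilde\cO$, so that it need not be stated as a separate term in the final rate.
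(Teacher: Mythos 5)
Your proposal is correct and follows essentially the same route as the paper, which obtains the corollary exactly by combining the decomposition \eqref{eq:decompose}, the estimation-error bound of Theorem \ref{thm:main}, and Proposition \ref{prop:V_star_llm} instantiated at $\tau = \sqrt{N}$, with the residual $\frac{\log(|\cA||\tilde\cA|)}{\sqrt{N}}$ term absorbed into the $\tilde\cO$ notation. Your explicit remark on why that logarithmic term can be hidden is the only detail the paper leaves implicit, and it is handled correctly.
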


We denote \(\epsilon_\mathrm{critic} = \gamma^H \varepsilon_0\) and \(\epsilon_\mathrm{pruner} = \frac{\varepsilon_1(\sqrt N)}{1 - \gamma}\).
Corollary \ref{cor:main} reveals that the suboptimality of the value estimation in our Algorithm \ref{alg:mcts} vanishes at a sublinear rate with respect to the number of simulations \(N\) up to errors incurred from LLM as critic and action pruner. 
Thus, with a proper chosen simulation number, our algorithm well approximates the optimal value function. Notice that if the LLM does not prune the action space at all, the last error term in Corollary \ref{cor:main} becomes \(\tilde\cO\Bigl(\frac{|\cA|}{\sqrt N}\Bigr)\), which is much bigger since \(|\cA| \gg |\tilde \cA|\).


\section{Experiments}

\begin{figure*}[htbp]
    \centering
    \includegraphics[width=.9\linewidth]{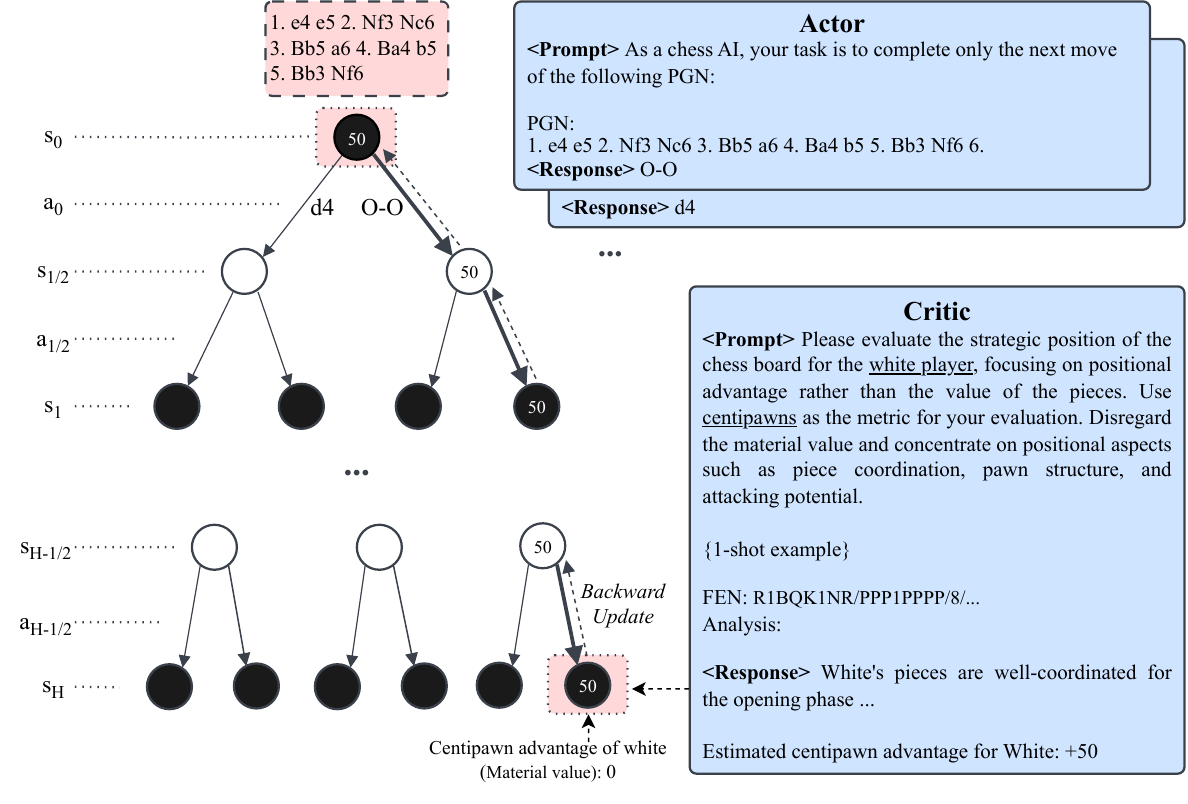}
    \caption{Algorithm illustration for chess. We assume black nodes corresponds to the state where the black player has just made a move, and vice versa for the white nodes.}
    \label{fig:tree}
\end{figure*}

We detail our experiments conducted across three different game scenarios. For each experiment, we configure the LLM as an action pruner, allowing it to suggest what it deems as the optimal action 20 times for each state, with a temperature setting of 0.7.
We set \(\eta_1 = \eta_2 = \half\) and \(\beta_h = 1.25\) for any \(h \in \cH\) for the UCB bonus defined in \eqref{eq:ucb}.
We begin by exploring two initial toy experiments: chess puzzles and MiniGo, where the LLM's role is solely to act as an action pruner. Given the shorter game horizons in these scenarios, we directly use the game's outcome as a proxy for the value function. Specifically, outcomes of winning, losing, and drawing are assigned rewards of 1, -1, and 0, respectively. 
Afterwards, we will present our main experiment conducted on full chess games, where we set a finite search depth and use a hybrid of rule-based and LLM-based value function proxy as illustrated in Figure \ref{fig:tree}. 

\subsection{Chess Puzzle}
Our first experiment setting is chess checkmate puzzles.
We collect chess puzzles from lichess\footnote{\url{https://database.lichess.org/\#puzzles}}. 
The puzzles have different themes describing the topic of the puzzle, such as winning the game with style, or carrying out special moves. We pick the puzzles with themes \textit{mateInN}. The corresponding descriptions are shown in Table \ref{tab:themes}.
\begin{table}[htbp]
    \centering
    \begin{tabular}{cc}
        \toprule
        \textbf{Theme} & \textbf{Description} \\
        \midrule
        \textit{mateIn1} & Deliver checkmate in one move.\\
        \textit{mateIn2} & Deliver checkmate in two moves.\\
        \textit{mateIn3} & Deliver checkmate in three moves.\\
        \textit{mateIn4} & Deliver checkmate in four moves.\\
        \textit{mateIn5} & Figure out a long (\(\geq 5\)) mating sequence.\\
        \bottomrule
    \end{tabular}
    \caption{Themes}
    \label{tab:themes}
\end{table}

Since both the white player and the black player are possible to be the puzzle solver, we name the role corresponding to the puzzle solver as the \textit{agent}, and name the other one as the \textit{opponent}.
We define the ``depth'' of a puzzle as the number of moves needed to deliver checkmate. We denote the state corresponding to the starting point of the puzzle as \(s_0\). For a puzzle with depth \(H\), we would know if the agent has solved the puzzle or not at step \(s_{H-1} \circ a_{H-1} = s_0 \circ a_0 \circ a_{\half} \circ \dots \circ a_{H-1}\). 
Without loss of generality, we assume the agent is the max player and the opponent is the min player. And the agent gets a reward of 1 if he delivers checkmate within required steps, and -1 otherwise.

Since puzzles with depth 1 and 2 are quite simple, we only consider puzzles with depth \(d = 3, 4, \dots, 8\). We construct the evaluation dataset corresponding to each depth by taking one hundred puzzles with the highest rating judged by lichess users.
We implement our Algorithm \ref{alg:mcts} with \texttt{gpt-3.5-turbo-instruct} for its lower cost. 
We compare our method with the following baseline methods:

{\em Vanilla LLM}: The vanilla LLM method uses majority voting to pick action out of it's proposed candidates. We set the temperature to 0.7 and set the number of chat completion choices to 20. And we use majority voting to pick the most popular legal answer from the 20 answers by the LLM. We choose \texttt{gpt-3.5-turbo-instruct} to conduct our experiment to align with our proposed method.


{\em MCTS (50) and MCTS (10,000)}: The vanilla MCTS method with 50 and 10,000 simulations. In this chess puzzle setting, the only difference between this baseline and our proposed algorithm is the number of simulations and the action space. The action space here is the full action space instead of the pruned action space by LLM.

The results are as shown in Table \ref{tab:puzzle}. By combining LLM and MCTS self-play, our method outperforms both LLM and MCTS baselines by a huge margin. Even with only \(50\) simulations, our method outperforms the MCTS baseline with \(10,000\) simulations.

\begin{table*}[t]
    \centering
    \begin{tabular}{ccccc}
        \toprule
        \textbf{Puzzle Depth} & \textbf{Vanilla LLM} & \textbf{MCTS (\(50\))} & 
        \textbf{MCTS (\(10,000\))} & \textbf{LLM + MCTS (\(50\))}\\
        \midrule
        3 & 10\% & 1\% & 33\% & \textbf{74\%}\\
        4 & 15\% & 1\% & 10\% & \textbf{67\%}\\
        5 & 5\% & 0\% & 2\% & \textbf{19\%}\\ 
        6 & 4\% & 0\% & 0\% & \textbf{24\%}\\
        7 & 1\% & 0\% & 0\% & \textbf{37\%}\\
        8 & 0\% & 0\% & 0\% & \textbf{43\%}\\
        \bottomrule
    \end{tabular}
    \caption{Percentage of solved chess puzzles by different methods. The best performance is marked with a bold font.}
    \label{tab:puzzle}
\end{table*}

\subsection{MiniGo}
We use \(5 \times 5\) go board as another toy example. We adopt the ko rule which states that the stones on the board must never repeat a previous position of stones.
We test our method and the baseline methods by letting them play as black with the initiative against a fixed opponent who adopts vanilla MCTS methods with 1,000 simulations and plays as white. Since the game on a \(5 \times 5\) go board is still very short, we set the depth of the MCTS to a large number so that we directly use the game’s outcome as a proxy for the value function. The outcomes of winning, losing, and drawing are assigned rewards of 1, -1, and 0, respectively. Note that the first player always has an advantage in Go. 
We use the territory score as our metric, which is defined by the territory of the black player minus that of the white player. We repeat the battle for 20 times and report the average score.
The results are shown in Table \ref{tab:go}. Our algorithm achieves the highest advantage against the fixed opponent.
\begin{table}[htbp]
    \centering
    \begin{tabular}{ccccc}
        \toprule
        \textbf{LLM} & \textbf{LLM + MCTS (\(50\))} & \textbf{MCTS (\(50\))} & \textbf{MCTS (\(10,000\))} \\
        \midrule
        -8.0 & 11.9 & 1.3 & 9.5 \\
        \bottomrule
    \end{tabular}
    \caption{The score of different methods playing MiniGo (\(5 \times 5\)) against a vanilla MCTS baseline with 1,000 simulations.}
    \label{tab:go}
\end{table}

\subsection{Chess: Full Game}

Our previous experiments on chess puzzles have shown the efficiency of LLM as MCTS pruner. In this part, we evaluate our method against the stockfish engine \footnote{\url{https://stockfishchess.org/}} in full games.
In a full game scheme, the game tree would become deep. Thus, we set a fixed MCTS search depth of \(10\) and use a hybrid value function proxy composed of both a rule based centipawn evaluator and a LLM-based critic.

{\em Rule-based critic:} The rule-based critic we use calculates the piece values of the current board in centipawns. The centipawn is the unit of measure used in chess as measure of the advantage. The value of a pawn is 100 by default. We assign a value for each of piece in Table \ref{tab:centipawn}. Our critic evaluates the current board by subtracting the total value of the current player's pieces by the total value of the opponent's pieces and divide the result by 1,000. For example, if the black player has one more queen than the white player in the current board, the value evaluated by the critic for the black player is 0.9. We denote this rule-based critic as \(V_\mathrm{rule}: \cS \to \RR\).
Specifically, if the game is finished, we set this critic value as 10 for the winner or 0 if the game is a draw.
\begin{table}[htbp]
    \centering
    \begin{tabular}{cccccc}
        \toprule
        \textbf{Piece} & Pawn & Knight & Bishop & Rook & Queen \\
        \midrule
        \textbf{Value} & 100 & 325 & 325 & 500 & 900 \\
        \bottomrule
    \end{tabular}
    \caption{Centipawn value for each kind of piece.}
    \label{tab:centipawn}
\end{table}

{\em LLM-based critic:} The LLM-based critic evaluates the strategic position of the chess board such as piece coordination, pawn structure and attacking potential, focusing on positional advantage rather than the value of the pieces. It also uses centipawn as the unit of metric and we divide the result by 1,000 just like the rule-based critic.
The prompt we use for LLM-based critic are shown in Figure \ref{fig:tree}. We denote the LLM-based critic as \(V_\mathrm{LLM}: \cS \to \RR\).

We combine the rule-based critic and the LLM-based critic by adding them together, i.e. \(\hat V(s) = V_\mathrm{rule}(s) + V_\mathrm{LLM}(s)\). 
In practice, we use \texttt{gpt-4} with a temperature of 0 as the backend of the LLM-based critic, and for consistency, we also use \texttt{gpt-4} as the backend of the actor.
To evaluate the performance, we define a score function that maps ``win'', ``tie'', and ``lose'' to \(1\), \(\half\), and \(0\), respectively. We evaluate our method, the standalone LLMs and conventional MCTS by playing against Stockfish engine with different engine levels.
The results are shown in Table \ref{tab:chess}.
For the MCTS baseline, we run it in two way, one with a search depth of \(512\) so that no critic is needed because the game is simulated to the end, and the other one with a search depth of \(10\) and is equipped with only the rule-based critic \(V_\mathrm{rule}\) same with our proposed method. However, we find that both MCTS baseline approaches failed to achieve a win or draw against the stockfish engine.

\begin{table}[htbp]
    \centering
    \begin{tabular}{cccc}
        \toprule
        \textbf{Level} & \textbf{LLM + MCTS ($50$)} & \textbf{LLM} & \textbf{MCTS ($100,000$)} \\
        \midrule
        1 & \textbf{0.78} & 0.35 & 0\\
        2 & \textbf{0.73} & 0.15 & 0\\
        3 & \textbf{0.47} & 0.05 & 0\\
        \bottomrule
    \end{tabular}
    \caption{The score of different methods playing chess against different levels of stockfish engine.}
    \label{tab:chess}
\end{table}

\section{Conclusion}
In this study, we harness the unique capabilities of Large Language Models (LLMs) as black-box oracles and Monte-Carlo Tree Search (MCTS) as planning oracles to develop a novel self-play algorithm tailored for deterministic turn-based zero-sum games. Our approach positions the LLM to fulfill dual roles: firstly, as an action pruner, narrowing the breadth of the MCTS search tree, and secondly, as a value function proxy, thereby shortening the depth required for the MCTS search tree. This dual functionality enhances the efficiency and effectiveness of both methodologies.

Our research is supported by both theoretical and empirical analyses. Theoretically, we demonstrate that the suboptimality of the estimated value in our method is proportional to \(\cO(\frac{|\tilde\cA|}{\sqrt{N}})\) up to errors incurred by using LLM as critic and action pruner, where \(N\) is the number of simulations and \(\tilde\cA\) is the pruned action space.
Empirically, through tests in chess and Go, we showcase our method's ability to tackle complex problems beyond the reach of traditional MCTS, as well as to outperform the direct application of LLMs, highlighting its potential to advance the field of game theory and artificial intelligence.

\newpage
\bibliography{ref}
\bibliographystyle{ims}

\appendix
\section{Value Estimation Error Analysis}
\label{sec:proof}

Without loss of generality, we assume the root node correspond to the state where the max player is about to make a move. Thus, the leaf nodes correspond to the state where the min player is about to make a move.
For any \(h \in \{0, \half, \dots, H-\half, H\}\), denote by \(n_h\) the number of nodes at level \(h\) and denote by \(s^{(1)}_h, s^{(2)}_h, \dots, s^{(n_h)}_h \in \cS\) the nodes in at level \(h\). 
We denote \(K = |\tilde\cA|\) the cardinality of the pruned action space and represent the pruned action space by \([K]\).
We rewrite the bonus function as 
\begin{align*}
    B_h(s, a) = \beta_h^{\sfrac{1}{\xi_h}} \cdot \frac{N(s)^{\sfrac{\alpha_h}{\xi_h}}}{\sqrt{N(s \circ a)}}.
\end{align*}

For any \((s, a) \in \cS \times [K]\), we denote by
\begin{align*}
    \tilde q_{H-\half}^\star(s, a) = \EE\bigl[R(s, a)\bigr] + \gamma \tilde v_H^\star(s \circ a)
\end{align*}
the estimated expected return for the min player at level \(H - \half\) for taking action \(a\) at state \(s\), where \(\tilde v_H^\star\) is the input value function of Algorithm \ref{alg:mcts} as an estimation of the true value function \(V^\star\).
Since the level \(H - \half\) corresponds to the min player, we define 
\begin{align*}
    \tilde v_{H-\half}^\star(s) = \min_{a \in [K]} \tilde q_{H-\half}^\star(s, a).
\end{align*}
Then, we define recursively 
\begin{align}
    \label{eq:value}
    \tilde q_h^\star(s, a) &= \EE\bigl[R(s, a)\bigr] + \tilde v_{h+\half}^\star(s \circ a),\quad
    \tilde v_h^\star(s) = \max_{a \in [K]} \tilde q_h^\star(s, a)\\
    \label{eq:value_}
    \tilde q^\star_{h + \half}(s, a) &= \EE\bigl[R(s, a)\bigr] + \gamma \tilde v_{h+1}^\star(s \circ a),\quad
    \tilde v^\star_{h + \half}(s) = \min_{a \in [K]} \tilde q_{h + \half}^\star(s, a)
\end{align}
for any \(h = H-1, H-2, \dots, 0\).
Meanwhile, we denote by 
\begin{align*}
    \tilde a_h^\star(s) \in \argmax_{a \in [K]} \tilde q_h^\star(s, a),\quad
    \tilde a_{h+\half}^\star(s) \in \argmin_{a \in [K]} \tilde q_{h+\half}^\star(s, a)
\end{align*}
the optimal action at state \(s\) according to the value function \(q\), and by
\begin{align}
    \label{eq:best2sec}
    \Delta_h(s) = \tilde v_h(s) - \max_{a \in [K], a \ne a_h^\star(s)} \tilde q_h^\star(s, a),\quad
    \Delta_{h + \half}(s) = \tilde v_{h+\half}^\star(s) - \max_{a \in [K], a \ne a_{h+\half}^\star(s)} \tilde q_{h+\half}^\star(s, a)
\end{align}
the gap between the optimal arm and the second optimal arm at state \(s\). Furthermore, we define the empirical value functions of our Algorithm \ref{alg:mcts}. For each level \(h \in \{0, \half, 1, \dots, H - \half\}\), denote by \(\tilde v_h^{(n)}(s)\) the sum of discounted rewards collected at state \(s\) during \(n \in \NN_+\) visits of it.

\subsection{Non-stationary Multi-Arm Bandit}
\label{sec:mab}
Consider non-stationary multi-arm bandit (MAB) problems. Let there be \(K \ge 1\) arms or actions and let \(X_{k,t}\) denote the random reward obtained by playing the arm \(k \in [K]\) for the \(t\)-th time with \(t \ge 1\). We assume that the optimal arm is unique.
We denote by \(\bar X_{k,n} = \frac{1}{n} \sum_{t=1}^n X_{k,t}\) the empirical mean of the reward obtained by playing arm \(k\) for \(n\) times and denote by \(\mu_{k,n} = \EE[\bar X_{k,n}]\) the expectation of the empirical mean. We assume that the optimal arm is unique. We make the following assumptions about the reward \(X_{k,t}\).

\begin{assumption}
    \label{asp:bounded}
    There exists an absolute constant \(R\) such that \(X_{k,t} \in [-R, R]\) for any arm \(k \in [K]\).
\end{assumption}
\begin{assumption}
\label{asp:process}
    The reward sequence \(\{X_{k,t}: t \ge 1\}\) is a non-stationary process such that

    \begin{enumerate}
        \item there exists \(\mu_k\) for any \(k \in [K]\) such that
        \begin{align*}
            \mu_k = \lim_{n \to \infty} \EE\Biggl[\frac{1}{n} \sum_{t=1}^n X_{k,t}\Biggr].
        \end{align*}
        \item there exists \(\beta \in (1, \infty)\) and \(\xi \in (0, \infty)\) such that for any \(z \in [1, \infty)\) and \(n \in \NN_+\),
        \begin{align*}
           \PP\Biggl(\sum_{t=1}^n X_{k,t} - n \mu_k \ge \sqrt{n} z\Biggr) \ge \frac{\beta}{z^\xi},\quad
           \PP\Biggl(\sum_{t=1}^n X_{k,t} - n \mu_k \le -\sqrt{n} z\Biggr) \ge \frac{\beta}{z^\xi}.
        \end{align*}
    \end{enumerate}
\end{assumption}

Assumption \ref{asp:bounded} states that the reward is bounded for all arms. And Assumption \ref{asp:process} establishes the convergence and concentration properties of the process. Those assumptions holds naturally for bounded and deterministic reward setting.

Consider the following variant of UCB algorithm applied to the above non-stationary MAB.
\begin{algorithm}[htbp]
\caption{A variant of Upper Confidence Bound (UCB) algorithm (Max Player)}
\begin{algorithmic}[1]
\label{alg:ucb}
\REQUIRE A non-stationary MAB with \(K\) arms. Parameters \(\alpha\), \(\beta\), \(\xi\).
\STATE Initialize \(T_{k,0} \leftarrow 0\) for any \(k \in [K]\).
\FOR{\(t \leftarrow 0, 1, \dots\)}
    \STATE \(U_{k,s,t} \leftarrow \sum_{\tau=1}^s X_{k,\tau} + B_{t,s}\), where \(B_{t,s} = \frac{\beta^{\sfrac{1}{\xi}} \cdot t^{\sfrac{\alpha}{\xi}}}{\sqrt{s}}\), for any \(s \in [t]\)
    \STATE \(k_t \leftarrow \argmax_{k \in [K]} U_{k, T_{k,t-1}, t-1}\) \quad{\color{blue}\texttt{/* Pull arm \(k_t\) and get reward \(X_{k_t, T_{k,t-1} + 1}\). */}}
    \STATE \(T_{k_t,t} \leftarrow T_{k_t,t-1} + 1\)
    \STATE \(T_{k,t} \leftarrow T_{k,t-1}, \forall k \in [K] \backslash \{k_t\}\)
\ENDFOR
\end{algorithmic}
\end{algorithm}

In Algorithm \ref{alg:ucb}, \(T_{k,t}\) is the number of times arm \(k\) has been played, up to (including) time \(t\), \(U_{k,s,t}\) is the upper confidence bound for arm \(k\) when it is played \(s\) times in total of \(t\) time steps, and \(B_{t,s}\) is the bonus term, where \(\beta\), \(\xi\) are constants defined in Assumption \ref{asp:process} and \(\alpha \in (0, \infty)\) is a tuning parameter that controls the exploration and exploitation trade-off. A tie is broken arbitrarily when selecting an arm in Line 4. 

Denote by \(\mu_{\max} = \max_{k \in [K]} \mu_k\) the optimal value with respect to the converged expectation, and by \(k_{\max} \in \argmax_{k \in [K]} \mu_k\) the corresponding optimal arm. We assume the optimal arm is unique.
Denote by \(\Delta_{\max} = \min_{k \in [K], k \ne k_{\max}} |\mu_{\max} - \mu_k|\) the gap between the optimal arm and the second optimal arm, and define
\begin{align}
    \label{eq:tau}
    \tau(t) = \bigl\lceil (\sfrac{2}{\Delta_{\max}} \cdot \beta^{\sfrac{1}{\xi}})^2 \cdot t^{\sfrac{2\alpha}{\xi}} \bigr\rceil.
\end{align}
Also, we denote by
\begin{align}
    \label{eq:tau_}
    \tau^\star = \min \bigl\{t \in \NN_+: t \ge \tau(t) \text{ and } 2R \tau(t) \ge \sqrt{t} + 2R(4K - 3)\bigr\} 
\end{align}
Denote by \(\bar X_n = \frac{1}{n} \sum_{k \in [K]} \sum_{t \in [T_{k,n}]} X_{k,t}\) the empirical mean reward under the Algorithm \ref{alg:ucb}.
The following theorem establishes theoretical analysis for Algorithm \ref{alg:ucb}.

\begin{theorem}[Theorem 3, \citet{shah2020non}]
    \label{thm:ucb}
    Under Assumptions \ref{asp:bounded} and \ref{asp:process}, the following holds for Algorithm \ref{alg:ucb} with any \(\alpha \in (2, \infty) \cap [\sfrac{\xi}{4}, \sfrac{\xi}{2})\).
    \begin{enumerate}
        \item Let \(\Delta_{\max} = \min_{k \in [K], k \ne k_{\max}} |\mu_{\max} - \mu_k|\). It then holds that
        \begin{align}
            \label{eq:ucb_convergence}
            \bigl|\EE[\bar{X}_{n}]-\mu_{\max}\bigr| & 
            \leq |\mu_{k_{\max},n} - \mu_{\max}| + \frac{2R (K-1) \cdot \Bigl(\bigl(\sfrac{2}{\Delta_{\max}} \cdot \beta^{\sfrac 1\xi}\bigr)^2 \cdot n^{\sfrac{2\alpha}{\xi}} + \frac{2}{\alpha - 2} + 1\Bigr)}{n}.
        \end{align}
        \item It holds for any $n \in \NN_+$ and $z \in [1, \infty)$ that
        \begin{align}
            \label{eq:ucb_concentration}
            \PP\big(n\bar{X}_{n} - n\mu_{\max} & \geq n^{\sfrac{2\alpha}{\xi}} z\big) \leq\frac{\beta'}{z^{\alpha -1}},
            \quad\PP\big(n\bar{X}_{n} - n\mu_{\max} \leq - n^{\sfrac{2\alpha}{\xi}} z\big) \leq\frac{\beta'}{z^{\alpha -1}},
        \end{align}
        where
        \begin{align*}
            \beta' = \max\Biggl\{2R(\tau^\star - 1)^{1 - \sfrac{2\alpha}{\xi}}, 2\bigl(2RK (\sfrac{2}{\Delta_{\max}})^{\sfrac{1}{\xi}}\bigr)^{\alpha-1} \cdot \max\biggl\{\beta, \frac{2(K - 1)}{(\alpha - 1) (1 + \tau(\tau^\star))^{\alpha - 1}}\biggr\}\Biggr\}.
        \end{align*}
    \end{enumerate}
\end{theorem}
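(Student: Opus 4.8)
The plan is to follow the two-part structure of the statement: first control the \emph{expected} per-round reward (claim~1), then upgrade this into the polynomial \emph{concentration} bound on the aggregate reward (claim~2). Both parts reduce to a single quantity, the number of times $T_{k,n}$ that a suboptimal arm $k \ne k_{\max}$ is pulled over $n$ rounds. The decomposition I would start from is
\begin{align*}
    n\bar X_n - n\mu_{\max}
    = \Bigl(\sum_{t=1}^{T_{k_{\max},n}} X_{k_{\max},t} - T_{k_{\max},n}\mu_{\max}\Bigr)
    + \sum_{k \ne k_{\max}} \Bigl(\sum_{t=1}^{T_{k,n}} X_{k,t} - T_{k,n}\mu_{\max}\Bigr),
\end{align*}
which isolates the deviation of the optimal arm's accumulated reward from the total contribution of the suboptimal arms. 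Since the optimum is unique, $\mu_k < \mu_{\max}$ strictly for $k \ne k_{\max}$, so each suboptimal summand further splits into a pure deviation $\sum_t X_{k,t} - T_{k,n}\mu_k$ governed by the tail bound in Assumption~\ref{asp:process}, plus a term $T_{k,n}(\mu_k - \mu_{\max}) \le 0$ that only helps the upper tail.

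For claim~1, I would bound $\EE[T_{k,n}]$ by the standard UCB counting argument adapted to the polynomial bonus $B_{t,s}$. The key observation is that $\tau(t)$ in \eqref{eq:tau} is exactly the number of pulls at which an arm's bonus shrinks to $\Delta_{\max}/2$: once a suboptimal arm has been played $\tau(n)$ times, its confidence index can surpass the optimal arm's only if either the optimal arm is underestimated or the suboptimal arm is overestimated by roughly $\Delta_{\max}/2$. Both events have probability controlled by the tail bound in Assumption~\ref{asp:process} at the scale set by $B_{t,s}$; summing the resulting polynomial series over rounds requires $\alpha > 2$ and produces the $\frac{2}{\alpha - 2}$ term, giving $\EE[T_{k,n}] \le \tau(n) + \frac{2}{\alpha-2} + 1$. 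Substituting this into the decomposition above, bounding $|X_{k,t}| \le R$ through Assumption~\ref{asp:bounded}, and absorbing the optimal arm's non-stationary bias into $|\mu_{k_{\max},n} - \mu_{\max}|$ yields \eqref{eq:ucb_convergence} after dividing by $n$.

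Claim~2 is the crux and the main obstacle, because it must convert a \emph{per-arm} tail with exponent $\xi$ into an \emph{aggregate} tail with the shifted exponent $\alpha - 1$ --- exactly the self-consistency required so that this theorem's conclusion can serve as the input hypothesis (Assumption~\ref{asp:process}) at the next level of the recursive MCTS tree. I would split the analysis at the threshold $\tau^\star$ in \eqref{eq:tau_}: for $n < \tau^\star$ a crude deterministic bound suffices and is absorbed into the first branch of $\beta'$, while for $n \ge \tau^\star$ the baseline count $\tau(n)$ of suboptimal pulls dominates the residual fluctuations. On the latter range I would bound the upper-tail event $\{ n\bar X_n - n\mu_{\max} \ge n^{2\alpha/\xi} z \}$ by a union of (i) a large deviation of the optimal arm's accumulated reward and (ii) an anomalously large number of suboptimal pulls beyond the $\tau(n)$ baseline, using the sign-favorable term $T_{k,n}(\mu_k - \mu_{\max}) \le 0$ from the decomposition. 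Calibrating these events at the threshold scale $n^{2\alpha/\xi}$ and integrating the per-arm polynomial tail against the pull-count distribution produces the exponent $\alpha - 1$; the window $\alpha \in [\sfrac{\xi}{4}, \sfrac{\xi}{2})$ is precisely what keeps this exponent and the scale $n^{2\alpha/\xi}$ consistent across levels, and the constant $\beta'$ emerges by taking the worse of the two regimes. The delicate bookkeeping --- and the part I expect to be hardest --- is controlling the interaction between the random, recursively defined counts $T_{k,n}$ and the concentration of the rewards $X_{k,t}$, which is where the polynomial rather than exponential tails are essential.
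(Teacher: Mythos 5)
You should first note that the paper itself contains no proof of this statement: Theorem~\ref{thm:ucb} is imported verbatim as Theorem~3 of \citet{shah2020non}, cited and then applied level-by-level in the MCTS recursion, so the only benchmark to compare against is the original proof in that reference. Measured against that source, your sketch follows essentially the same architecture and is sound: claim~1 via the polynomial-bonus counting argument giving $\EE[T_{k,n}] \le \tau(n) + \frac{2}{\alpha-2} + 1$ (the two deviation events each contribute a series $\sum_t t^{1-\alpha}$, which is exactly where $\alpha > 2$ enters), with the optimal arm's non-stationary bias kept as the separate term $|\mu_{k_{\max},n} - \mu_{\max}|$; and claim~2 by splitting at $\tau^\star$, treating $n < \tau^\star$ with the deterministic bound $|n\bar X_n - n\mu_{\max}| \le 2Rn$ (the first branch of $\beta'$), and for $n \ge \tau^\star$ integrating the tail of the excess pull counts, $\PP\bigl(T_{k,n} \ge \tau(n) + m\bigr) \lesssim m^{1-\alpha}/(\alpha-1)$, which is precisely the source of both the degraded exponent $\alpha - 1$ and the second branch of $\beta'$. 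Your reading of the window $[\sfrac{\xi}{4}, \sfrac{\xi}{2})$ is also the right one: $\alpha \le \sfrac{\xi}{2}$ keeps the deviation scale $n^{\sfrac{2\alpha}{\xi}}$ sublinear while $\alpha \ge \sfrac{\xi}{4}$ keeps it at least $\sqrt{n}$, which is what allows the theorem's output to be re-fed as Assumption~\ref{asp:process} one level up (via Lemma~\ref{lma:prelim}). The one caveat is the one you flag yourself: the coupling between the random counts $T_{k,n}$ and the reward sums, and the fact that in the non-stationary setting the overestimation events must be stated relative to the finite-sample means $\mu_{k,s}$ rather than the limits $\mu_k$, is real bookkeeping carried out in \citet{shah2020non} but not executed here; that makes your proposal a correct plan rather than a complete proof, not a wrong approach.
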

Here, \(\Delta_{\max}\) is the gap between the optimal arm and the second optimal arm, \(\tau\) and \(\tau^\star\) are given by \eqref{eq:tau} and \eqref{eq:tau_}, respectively. Theorem \ref{thm:ucb} provides both the convergence guarantee \eqref{eq:ucb_convergence} and the concentration guarantee \eqref{eq:ucb_concentration} of the UCB algorithm described in Algorithm \ref{alg:ucb}. 

\paragraph{Min player counterpart.} We can construct a min player version of the non-stationary multi-arm bandit by adopting a negative bonus and choosing the arm with lowest UCB value. We include the algorithm here for completeness.

\begin{algorithm}[htbp]
\caption{A variant of Upper Confidence Bound (UCB) algorithm (Min Player)}
\begin{algorithmic}[1]
\label{alg:ucb_min}
\REQUIRE A non-stationary MAB with \(K\) arms. Parameters \(\alpha\), \(\beta\), \(\xi\).
\STATE Initialize \(T_{k,0} \leftarrow 0\) for any \(k \in [K]\).
\FOR{\(t \leftarrow 0, 1, \dots\)}
    \STATE \(U_{k,s,t} \leftarrow \sum_{\tau=1}^s X_{k,\tau} - B_{t,s}\), where \(B_{t,s} = \frac{\beta^{\sfrac{1}{\xi}} \cdot t^{\sfrac{\alpha}{\xi}}}{\sqrt{s}}\), for any \(s \in [t]\)
    \STATE \(k_t \leftarrow \argmin_{k \in [K]} U_{k, T_{k,t-1}, t-1}\) \quad{\color{blue}\texttt{/* Pull arm \(k_t\) and get reward \(X_{k_t, T_{k,t-1} + 1}\). */}}
    \STATE \(T_{k_t,t} \leftarrow T_{k_t,t-1} + 1\)
    \STATE \(T_{k,t} \leftarrow T_{k,t-1}, \forall k \in [K] \backslash \{k_t\}\)
\ENDFOR
\end{algorithmic}
\end{algorithm}

Similarly, we define \(\mu_{\min} = \min_{k \in [K]} \mu_k\), \(k_{\min} = \argmin_{k \in [K]} \mu_k\), and \(\Delta_{\min} = \min_{k \in [K], k \ne k_{\min}} |\mu_{\min} - \mu_k|\). Then, we mirror Theorem \ref{thm:ucb} to give the following analysis for the min player.

\begin{corollary}[UCB Min Player]
    \label{thm:ucb_min}
    Under Assumptions \ref{asp:bounded} and \ref{asp:process}, the following holds for Algorithm \ref{alg:ucb_min} with any \(\alpha \in (2, \infty) \cap [\sfrac{\xi}{4}, \sfrac{\xi}{2})\).
    \begin{enumerate}
        \item Let \(\Delta_{\min} = \min_{k \in [K], k \ne k_{\min}} |\mu_{\min} - \mu_k|\). It then holds that
        \begin{align}
            \label{eq:ucb_convergence_}
            \bigl|\EE[\bar{X}_{n}]-\mu_{\min}\bigr| & 
            \leq |\mu_{k_{\min},n} - \mu_{\min}| + \frac{2R (K-1) \cdot \Bigl(\bigl(\sfrac{2}{\Delta_{\min}} \cdot \beta^{\sfrac 1\xi}\bigr)^2 \cdot n^{\sfrac{2\alpha}{\xi}} + \frac{2}{\alpha - 2} + 1\Bigr)}{n}.
        \end{align}
        \item There exist constant $\beta' \in (1, \infty)$ depending on $R, K, \Delta_{\min}, \beta$,  $\xi$ and $\alpha$ such that it holds for any $n \in \NN_+$ and $z \in [1, \infty)$ that
        \begin{align}
            \label{eq:ucb_concentration_}
            \PP\big(n\bar{X}_{n} - n\mu_{\min} & \geq n^{\sfrac{2\alpha}{\xi}} z\big) \leq\frac{\beta'}{z^{\alpha -1}},
            \quad\PP\big(n\bar{X}_{n} - n\mu_{\min} \leq - n^{\sfrac{2\alpha}{\xi}} z\big) \leq\frac{\beta'}{z^{\alpha -1}}.
        \end{align}
    \end{enumerate}
\end{corollary}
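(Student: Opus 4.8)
The plan is to prove Corollary \ref{thm:ucb_min} by a sign-flip reduction to the max-player result in Theorem \ref{thm:ucb}, rather than redoing the entire argument of \citet{shah2020non}. Concretely, I would introduce the negated reward process \(Y_{k,t} = -X_{k,t}\) for every arm \(k \in [K]\) and time \(t \ge 1\), and show that running Algorithm \ref{alg:ucb_min} on the rewards \(\{X_{k,t}\}\) produces exactly the same sequence of arm pulls as running the max-player Algorithm \ref{alg:ucb} on the rewards \(\{Y_{k,t}\}\). This follows because the min-player index satisfies \(U_{k,s,t} = \sum_{\tau=1}^s X_{k,\tau} - B_{t,s} = -\bigl(\sum_{\tau=1}^s Y_{k,\tau} + B_{t,s}\bigr)\), so it is the negation of the max-player index built from \(Y\); consequently \(\argmin_k U_{k,T_{k,t-1},t-1}\) for \(X\) coincides with \(\argmax_k\) of the max-player index for \(Y\), and the two algorithms share identical counts \(T_{k,t}\) along every sample path.

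Next I would verify that the negated process still satisfies the hypotheses of Theorem \ref{thm:ucb}. Boundedness (Assumption \ref{asp:bounded}) is immediate since \(Y_{k,t} \in [-R, R]\). For Assumption \ref{asp:process}, the limiting mean transfers as \(\mu_k^Y = -\mu_k^X\), and the two-sided concentration bounds transfer with the \emph{same} constants \(\beta, \xi\) because the stated tail inequalities are symmetric: \(\PP(\sum_{t} Y_{k,t} - n\mu_k^Y \ge \sqrt{n}z) = \PP(\sum_t X_{k,t} - n\mu_k^X \le -\sqrt{n}z)\), and vice versa. Under the sign flip the optimal arm is preserved, so \(k_{\min}\) for \(X\) equals \(k_{\max}\) for \(Y\), and the suboptimality gap is invariant, \(\Delta_{\min} = \min_{k \ne k_{\min}} |\mu_{\min}^X - \mu_k^X| = \min_{k \ne k_{\max}} |\mu_{\max}^Y - \mu_k^Y| = \Delta_{\max}\) for \(Y\).

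With the reduction in place, I would apply Theorem \ref{thm:ucb} to the \(Y\)-process and translate the conclusions back using \(\bar X_n = -\bar Y_n\), \(\mu_{\min}^X = -\mu_{\max}^Y\), and \(\mu_{k,n}^X = -\mu_{k,n}^Y\). The convergence bound \eqref{eq:ucb_convergence} for \(Y\) becomes \eqref{eq:ucb_convergence_} verbatim after noting \(|\EE[\bar Y_n] - \mu_{\max}^Y| = |\EE[\bar X_n] - \mu_{\min}^X|\) and \(|\mu_{k_{\max}^Y,n}^Y - \mu_{\max}^Y| = |\mu_{k_{\min}^X,n}^X - \mu_{\min}^X|\). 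For the concentration bound, the identity \(n\bar Y_n - n\mu_{\max}^Y = -(n\bar X_n - n\mu_{\min}^X)\) swaps the two tails, so each tail inequality in \eqref{eq:ucb_concentration} maps onto the opposite tail in \eqref{eq:ucb_concentration_} with the same constant \(\beta'\).

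I do not anticipate a deep obstacle, since the result is a direct corollary; the only points requiring care are bookkeeping ones. The first is confirming that the tie-breaking in Line 4 and the identical bonus term \(B_{t,s}\) make the pathwise equivalence of the two algorithms exact rather than merely in distribution. The second is ensuring the symmetric form of the concentration assumption is genuinely used when transferring the tail bounds, so that the same \(\beta, \xi\) (and hence the same \(\beta'\)) are inherited; because Assumption \ref{asp:process} is already stated with matched two-sided tails, this transfer is clean and introduces no new constants.
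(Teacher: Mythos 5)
Your proposal is correct and takes essentially the same approach as the paper: the paper simply asserts that the min-player result follows by ``mirroring'' Theorem \ref{thm:ucb} (negating the bonus and selecting the arm with the lowest index), and your sign-flip reduction via \(Y_{k,t} = -X_{k,t}\) is precisely the rigorous formalization of that mirroring, including the pathwise equivalence of arm pulls, the transfer of Assumptions \ref{asp:bounded} and \ref{asp:process} under negation, and the identification \(\Delta_{\min} = \Delta_{\max}\) for the negated process. In fact, your writeup supplies the bookkeeping details (tie-breaking, tail-swapping) that the paper leaves implicit.
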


\subsection{Leaf Level}
The leaf nodes at level \(H\) are children of nodes at level \(H - \half\) in the MCTS tree. Without loss of generality, we assume the leaf nodes correspond to the state where the max player is about to make a move, the same as the root node \(s_0\). Therefore, the nodes at level \(H - \half\) correspond to the min player. Consider node \(i \in [n_{H - \half}]\) at level \(H - \half\), corresponding to state \(s_i\). Each time the min player is at this node, it takes an action \(a_{H - \half} \in [K]\) and reaches the node \(s_H' = s^{(i)}_{H - \half} \circ a_{H - \half}\) at the leaf level \(H\). Then, the reward collected for the node \(s^{(i)}_{H - \half}\) and action \(a_{H - \half}\) is \(R(s^{(i)}_{H-\half} \circ a_{H - \half}) + \gamma \tilde v_H^\star(s_H')\), where \(\tilde v_H^\star(s_H')\) is calculated by the input value function proxy for Algorithm \ref{alg:mcts}. Thus, each time the node \(s^{(i)}_{H - \half}\) is visited, the reward is the summation of bounded independent and identical (for a given action) random variables and a deterministic evaluation. By Lemma \ref{lma:prelim}, there exists \(\beta_H\) such that the collected rewards at \(s^{(i)}_{H - \half}\) satisfy the concentration property \eqref{eq:prelim_concentration} and the convergence property \eqref{eq:prelim_convergence}.
This process mirrors the MAB problem we introduced in Section \ref{sec:mab}.
In order to apply Theorem \ref{thm:ucb} or Corollary \ref{thm:ucb_min}, we denote by \(R^{\max}_h\) the maximum magnitude of the rewards collected by all the nodes in level \(h \in \{0, \half, 1, \dots, H\}\) and we quantify it later in Lemma \ref{lma:magnitude}.
Then, we have the following lemma that establishes the convergence and concentration property of the nodes in level \(H - \half\).

\begin{lemma}[Leaf]
\label{lma:leaf}
Consider a node corresponding to state $s^{(i)}_{H-\half}$ at level $H-\half$ within the MCTS for $i \in [n_{H-\half}]$. Let $\tilde{v}^{(n)}_{H-\half}(s^{(i)}_{H-\half})$ be the total discounted reward collected at $s^{(i)}_{H-\half}$ during $n \in \NN_+$ visits of it, to one
of its $K$ leaf nodes under the UCB policy. Then, there exists appropriately large $\beta_{H-\half} \in (0, \infty)$ such that the following holds for a given $\xi_{H-\half} > 0$ and $\alpha_{H-\half} > 2$.
\begin{enumerate}
    \item It holds that
    \begin{align*}
        &\biggl|\EE\Bigl[\tilde v^{(n)}_{H-\half}(s^{(i)}_{H-\half})\Bigr] - n \cdot \tilde v_{H-\half}^\star(s^{(i)}_{H-\half}) \biggr|\\
        &\quad\leq 2 R^{\max}_{H-\half} (K-1)\cdot\Biggl(\biggl(
        \frac{2\beta_{H-\half}^{\sfrac{1}{\xi_{H-\half}}}}{\Delta_{H-\half}(s^{(i)}_{H-\half})} \biggr)^2\cdot n^{\frac{2\alpha_{H-\half}}{\xi_{H-\half}}}+\frac{2}{\alpha_{H-\half}-2} + 1\Biggr).
    \end{align*}
    \item There exist $\beta' \in (1, \infty)$ such that it holds for any $n \in \NN_+$ and $z \in [1, \infty)$ that
    \begin{align*}
        &\PP\Bigl(\tilde v^{(n)}_{H-\half}(s^{(i)}_{H-\half}) - n \cdot \tilde v_{H-\half}^\star(s^{(i)}_{H-\half})
        \geq n^{\frac{2\alpha_{H-\half}}{\xi_{H-\half}}} z\Bigr) \leq\frac{\beta'}{z^{\alpha_{H-\half} - 1}},\\
        & \PP\Big(\tilde{v}^{(n)}_{H-\half}(s^{(i)}_{H-\half}) - n \cdot \tilde v_{H-\half}^\star(s^{(i)}_{H-\half}) 
        \leq -n^{\frac{2\alpha_{H-\half}}{\xi_{H-\half}}} z\Big) \leq\frac{\beta'}{z^{\alpha_{H-\half} - 1}}.
    \end{align*}
\end{enumerate}
\end{lemma}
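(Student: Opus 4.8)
The plan is to recognize Lemma~\ref{lma:leaf} as a direct instantiation of the min-player UCB analysis (Corollary~\ref{thm:ucb_min}) at the leaf level, where the underlying bandit is \emph{stationary}. First I would set up the correspondence between the node $s^{(i)}_{H-\half}$ and a $K$-armed bandit in the sense of Section~\ref{sec:mab}. Each action $a \in [K]$ available to the min player corresponds to an arm, and the reward received upon selecting arm $a$ for the $t$-th time is $X_{a,t} = R_t\bigl(s^{(i)}_{H-\half}, a\bigr) + \gamma\, \tilde v_H^\star\bigl(s^{(i)}_{H-\half}\circ a\bigr)$, where $R_t$ is the $t$-th i.i.d.\ draw of the stochastic reward and $\gamma \tilde v_H^\star(\cdot)$ is a deterministic offset supplied by the fixed value-function proxy. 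With this identification the collected total $\tilde v^{(n)}_{H-\half}(s^{(i)}_{H-\half})$ equals $n\bar X_n$ in the notation of Section~\ref{sec:mab}; since the min player pulls arms by the rule in Algorithm~\ref{alg:ucb_min}, the relevant converged value is $\mu_{\min} = \min_{a\in[K]} \tilde q_{H-\half}^\star(s^{(i)}_{H-\half}, a) = \tilde v_{H-\half}^\star(s^{(i)}_{H-\half})$ by the half-step recursion \eqref{eq:value_}, and the second-best gap is exactly $\Delta_{\min} = \Delta_{H-\half}(s^{(i)}_{H-\half})$ from \eqref{eq:best2sec}.

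Next I would verify Assumptions~\ref{asp:bounded} and~\ref{asp:process} for this reward sequence. Boundedness holds because $R$ is bounded and the value proxy is bounded, yielding the magnitude $R^{\max}_{H-\half}$ quantified later in Lemma~\ref{lma:magnitude}. The crucial observation is that, at the leaf level, the rewards $\{X_{a,t}\}_t$ are \emph{i.i.d.}\ across visits of each arm---the value proxy $\tilde v_H^\star$ is fixed and the only randomness is the stochastic reward---so the process is stationary. Consequently the limit $\mu_a$ exists trivially and coincides with $\mu_{a,n}$ for every $n$, which makes the bias term $|\mu_{k_{\min},n} - \mu_{\min}|$ in \eqref{eq:ucb_convergence_} vanish. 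The polynomial concentration required by Assumption~\ref{asp:process} then follows from Lemma~\ref{lma:prelim} applied to these bounded i.i.d.\ summands, for which one may take $\beta_{H-\half}$ sufficiently large together with any fixed $\xi_{H-\half}>0$ and $\alpha_{H-\half}>2$ in the admissible range.

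Finally, with these identifications I would invoke Corollary~\ref{thm:ucb_min} directly: part~(1) of the corollary produces part~(1) of the lemma once the vanishing bias term is dropped and $R$, $K$, $\Delta_{\min}$, $\alpha$, $\xi$ are replaced by $R^{\max}_{H-\half}$, $K$, $\Delta_{H-\half}(s^{(i)}_{H-\half})$, $\alpha_{H-\half}$, $\xi_{H-\half}$; part~(2) transfers verbatim to give the two-sided concentration in part~(2). I expect the only real subtlety to be the verification of the concentration hypothesis in Assumption~\ref{asp:process} for the leaf rewards: this is precisely where the stationary i.i.d.\ structure of the leaf level is essential, making this the clean base case of the backward induction over levels (where higher levels will instead be genuinely non-stationary). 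Everything else reduces to a careful matching of notation between the node $s^{(i)}_{H-\half}$ and the generic bandit of Section~\ref{sec:mab}.
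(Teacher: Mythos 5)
Your proposal is correct and follows essentially the same route as the paper: the paper likewise identifies each leaf-level node with a $K$-armed bandit whose per-pull reward is the bounded i.i.d.\ reward $R(s^{(i)}_{H-\half}, a)$ plus the deterministic offset $\gamma \tilde v_H^\star(s^{(i)}_{H-\half}\circ a)$, verifies the convergence and concentration hypotheses via Lemma~\ref{lma:prelim} with magnitude bound $R^{\max}_{H-\half}$, and then invokes Corollary~\ref{thm:ucb_min} to read off both parts of the lemma. Your additional observations—that stationarity makes the bias term $|\mu_{k_{\min},n}-\mu_{\min}|$ vanish (which is exactly why it is absent from the lemma's part~(1)) and that the gap matches $\Delta_{H-\half}(s^{(i)}_{H-\half})$—are the same notational matching the paper leaves implicit.
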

\begin{proof}
    The proof follows directly from Corollary \ref{thm:ucb_min}.
\end{proof}



\subsection{Recursion}
\label{sec:recursion}
Lemma \ref{lma:leaf} suggests that the convergence assumption \eqref{eq:ucb_convergence} and concentration assumption \eqref{eq:ucb_concentration} of Theorem \ref{thm:ucb} are satisfied by $\tilde{v}^{(n)}_{H-\half}$ for each node at level $H-\half$ with $\alpha^{H-\half}$ and $\xi^{(H-\half)}$ defined in Theorem \ref{thm:main} and with appropriately defined large enough constant $\beta_{H-\half}$.
We claim that result similar to Lemma \ref{lma:leaf}, but for node at level $H-1$, continues to hold with parameters $\alpha_{H-1}$ and $\xi_{H-1}$ as defined in Theorem \ref{thm:main} and with   appropriately defined large enough constant $\beta_{H-1}$. 
And similar argument will continue to apply going from level $h$ to $h-\half$ for all $h \in \{\half, 1, \dots, H - 1, H-\half\}$. That is, we shall assume that the convergence and concentration assumptions of Theorem \ref{thm:ucb} or Corollary \ref{thm:ucb_min} hold for $\tilde v_h(\cdot)$, for all nodes at level $h$ with parameters $\alpha^{(h)}$ and $\xi^{(h)}$ defined in Theorem \ref{thm:main} and with appropriately defined large enough constant $\beta_h$, and then argue that such holds for nodes at level $h-\half$ as well. Thus, we can prove the results for all $h \in \{\half, 1, \dots, H - 1, H-\half\}$ by induction. 

To that end, we first consider the nodes corresponding to the max player. 
For any \(h \in [H - 1]\), consider a node corresponding to state $s^{(i)}_h$ at level $h$ within the MCTS for $i \in [n_h]$.
%
As part of the algorithm, whenever this node is visited, one of the $K$ feasible action $a$ is taken and the node $s_{h + \half} = s^{(i)}_h \circ a$ at level $h + \half$ will be reached. This results 
in a reward $R(s^{(i)}_h, a) + \tilde v_h(s_{h + \half})$ at node $s^{(i)}_h$ at level $h$. Since $R(s, a)$ is an independent, bounded valued random variable while $\tilde v_h(\cdot)$ is effectively collected by following a path all the way to the leaf level. Inductively, we assume that $\tilde v_{h + \half}(\cdot)$ satisfies the convergence and concentration property for each node at level $h + \half$, with $\alpha_{h + \half}$ and $\xi_{h + \half}$ given by Theorem \ref{thm:main} and with appropriately defined large enough constant $\beta_{h + \half}$. Therefore, by an application of Lemma \ref{lma:prelim}, it follows that this combined reward continues to satisfy the convergence \eqref{eq:ucb_convergence} and concentration \eqref{eq:ucb_concentration} properties.
Thus, we invoke Theorem \ref{thm:ucb} and conclude the follow lemma.

\begin{lemma}
[Max player]
\label{lma:recursion}
Consider a node corresponding to state $s^{(i)}_h$ at level $h$ in the MCTS tree for $i \in [n_h]$. Let $\tilde v_h^{(n)}(s^{(i)}_h)$ be the sum discounted reward collected at $s^{(i)}_h$ during $n \in \NN_+$ visits. Then, the following holds for the choice of appropriately large $\beta_{h + \half} > 0$, for a given $\xi_{h+\half} > 0$ and $\alpha_{h+\half} > 2$.
\begin{enumerate}
    \item It holds that
    \begin{align*}
        \biggl|\frac{1}{n} \EE\Big[v^{(n)}_h(s^{(i)}_h)\Big] - \tilde v_h^\star(s^{(i)}_h) \biggr|
        &\leq \frac{2R^{\max}_h (K-1) \cdot \biggl(\Bigl(
        \frac{2\beta_h^{\sfrac{1}{\xi_h}}}{\Delta_h(s^{(i)}_h)} \Bigr)^2\cdot n^{\frac{2\alpha_h}{\xi_h}}+\frac{2}{\alpha_h-2} + 1\biggr)}{n}.
    \end{align*}
    \item There exist a large enough constant $\beta' \in (1, \infty)$ such that it holds for any $n \in \NN_+$ and $z \in [1, \infty)$ that
    \begin{align*}
        &\PP\Bigl(v^{(n)}_h(s^{(i)}_h) - n \cdot \tilde v_h^\star(s^{(i)}_h) \geq n^{\frac{2\alpha_h}{\xi_h}} z\Bigr) \leq\frac{\beta'}{z^{\alpha_h - 1}},\\
        &\PP\Bigl(v^{(n)}_h(s^{(i)}_h) - n \cdot \tilde v_h^\star(s^{(i)}_h) \leq -n^{\frac{2\alpha_h}{\xi_h}} z\Bigr) \leq\frac{\beta'}{z^{\alpha_h - 1}}.
    \end{align*}
\end{enumerate}
\end{lemma}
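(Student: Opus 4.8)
The plan is to carry out the inductive step that lifts the convergence and concentration guarantees from level $h+\half$ to level $h$ at a node where the max player moves, thereby recasting the behaviour of $\tilde v_h^{(n)}(s^{(i)}_h)$ as an instance of the non-stationary MAB analyzed in Theorem \ref{thm:ucb}. First I would fix the node $s^{(i)}_h$ and view each of its $K$ feasible actions as an arm. On the $t$-th pull of arm $a$ the reward fed back to this node is $X_{a,t} = R_t(s^{(i)}_h, a) + W_{a,t}$, where $R_t$ is an independent draw of the bounded stochastic reward and $W_{a,t}$ is the discounted return collected on the corresponding visit to the child $s^{(i)}_h \circ a$; summing over pulls gives $\sum_{t=1}^{m} W_{a,t} = \tilde v_{h+\half}^{(m)}(s^{(i)}_h \circ a)$. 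Using the recursion \eqref{eq:value} I would identify the limiting per-arm mean as $\mu_a = \EE[R(s^{(i)}_h, a)] + \tilde v_{h+\half}^\star(s^{(i)}_h \circ a) = \tilde q_h^\star(s^{(i)}_h, a)$, so that $\mu_{\max} = \max_{a} \mu_a = \tilde v_h^\star(s^{(i)}_h)$ and the gap satisfies $\Delta_{\max} = \Delta_h(s^{(i)}_h)$ by \eqref{eq:best2sec}.

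The next step is to verify that this per-arm reward process meets Assumptions \ref{asp:bounded} and \ref{asp:process}. Boundedness with constant $R^{\max}_h$ is immediate from the definition of $R^{\max}_h$ as the maximal reward magnitude at level $h$, since $X_{a,t}$ is the discounted return of a single path to the leaf level. For the convergence and concentration parts of Assumption \ref{asp:process} I would invoke the inductive hypothesis, namely that $\tilde v_{h+\half}(\cdot)$ obeys the convergence property \eqref{eq:ucb_convergence} and the concentration property \eqref{eq:ucb_concentration} with parameters $\alpha_{h+\half}, \xi_{h+\half}$ and a sufficiently large $\beta_{h+\half}$, and then combine it with the independent bounded summand $R(s^{(i)}_h, a)$ through Lemma \ref{lma:prelim}. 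This shows the combined reward inherits the polynomial tail concentration and the mean convergence required of a non-stationary MAB, with limiting mean $\tilde q_h^\star(s^{(i)}_h, a)$.

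Finally I would apply Theorem \ref{thm:ucb} to this MAB run under the max-player UCB rule of Algorithm \ref{alg:ucb} with parameters $\alpha_h, \xi_h$ and an appropriately large $\beta_h$. Matching $\mu_{\max} = \tilde v_h^\star(s^{(i)}_h)$, $\Delta_{\max} = \Delta_h(s^{(i)}_h)$ and $R = R^{\max}_h$, the convergence bound \eqref{eq:ucb_convergence} and the concentration bound \eqref{eq:ucb_concentration} translate directly into the two claims of the lemma. The residual $|\mu_{k_{\max},n} - \mu_{\max}|$ term appearing in \eqref{eq:ucb_convergence} is itself controlled by the inductive convergence rate of the child at level $h+\half$, hence of comparable or lower polynomial order, and is absorbed into the displayed bound.

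The main obstacle I anticipate is the bookkeeping of parameters across levels rather than any single inequality. Applying Theorem \ref{thm:ucb} requires $\alpha_h \in (2,\infty) \cap [\xi_h/4, \xi_h/2)$, and the concentration exponent emitted at level $h+\half$ (of order $\alpha_{h+\half}-1$) is exactly what plays the role of the tail exponent $\xi_h$ driving the MAB at level $h$. Ensuring that this induced sequence of pairs $(\alpha_h, \xi_h)$ remains inside the admissible window at every level, while simultaneously choosing each $\beta_h$ large enough for Lemma \ref{lma:prelim} to apply, is the delicate part; this is precisely what pins down the constants $\{\beta_h\}_{h\in\cH}$ promised in Theorem \ref{thm:main}. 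The half-step alternation between players, which calls Theorem \ref{thm:ucb} at this max node and Corollary \ref{thm:ucb_min} at the companion min node, must be threaded through consistently so that the same recursion closes at every level.
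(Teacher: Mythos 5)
Your proposal follows essentially the same route as the paper's proof: view the $K$ actions at $s^{(i)}_h$ as arms of a non-stationary MAB whose per-pull reward is $R(s^{(i)}_h,a)$ plus the return collected at the child $s^{(i)}_h \circ a$, use the inductive hypothesis at level $h+\half$ together with Lemma \ref{lma:prelim} to verify the convergence and concentration assumptions, and then invoke Theorem \ref{thm:ucb} with the identifications $\mu_{\max}=\tilde v_h^\star(s^{(i)}_h)$, $\Delta_{\max}=\Delta_h(s^{(i)}_h)$, $R=R^{\max}_h$. If anything, your write-up is more explicit than the paper's, which leaves the arm identification, the residual $|\mu_{k_{\max},n}-\mu_{\max}|$ term, and the cross-level parameter matching ($\alpha_h=\alpha_{h+\half}/4$, $\xi_h=\alpha_{h+\half}$) implicit or deferred to the final assembly of Theorem \ref{thm:main}.
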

Next, we consider a node corresponding to state \(s^{(i)}_{h - \half}\) where the min player is about to make a move for any \(h \in [H-1]\).
Similarly, whenever this node is visited, one of the $K$ feasible action $a$ is taken and the node $s_h = s^{(i)}_{h-\half} \circ a$ at level $h$ will be reached. This results 
in a reward $R(s^{(i)}_{h-\half}, a) + \gamma \tilde v_{h-\half}(s_h)$ at node $s^{(i)}_{h-\half}$ at level $h-\half$. Here, the only difference is the \(\gamma\) factor, which does not effect our conclusion. By invoking Theorem \ref{thm:ucb}, we conclude the follow lemma.
\begin{lemma}
[Min player]
\label{lma:recursion_}
Consider a node corresponding to state $s^{(i)}_{h-\half}$ at level $h-\half$ in the MCTS tree for $i \in [n_{h-\half}]$. Let $\tilde v_{h-\half}^{(n)}(s^{(i)}_{h-\half})$ be the sum discounted reward collected at $s^{(i)}_{h-\half}$ during $n \in \NN_+$ visits. Then, the following holds for the choice of appropriately large $\beta_h > 0$, for a given $\xi_h > 0$ and $\alpha_h > 2$.
\begin{enumerate}
    \item It holds that
    \begin{align*}
        \biggl|\frac{1}{n}\EE\Bigl[\tilde{v}^{(n)}_{h-\half}(s^{(i)}_{h-\half})\Bigr] - \tilde v_{h-\half}^\star(s^{(i)}_{h-\half})\biggr|
        &\leq \frac{2R^{\max}_{h-\half} (K-1) \cdot \biggl(\Bigl(
        \frac{2\beta_{h-\half}^{\sfrac{1}{\xi_{h-\half}}}}{\Delta_{h-\half}(s^{(i)}_{h-\half})} \Bigr)^2 \cdot n^{\frac{2\alpha_{h-\half}}{\xi_{h-\half}}} + \frac{2}{\alpha_{h-\half}-2} + 1\biggr)}{n}.
    \end{align*}
    \item There exist a large enough constant $\beta' \in (1, \infty)$ such that it holds for any $n \in \NN_+$ and $z \in [1, \infty)$ that
    \begin{align*}
        &\PP\Bigl(\tilde{v}^{(n)}_{h-\half}(s^{(i)}_{h-\half}) - n \cdot \tilde v_{h-\half}^\star(s^{(i)}_{h-\half}) \geq n^{\frac{2\alpha_{h-\half}}{\xi_{h-\half}}} z\Bigr) \leq \frac{\beta'}{z^{\alpha_{h-\half} - 1}},\\
        &\PP\Bigl(\tilde{v}^{(n)}_{h-\half}(s^{(i)}_{h-\half}) - n \cdot \tilde v_{h-\half}^\star(s^{(i)}_{h-\half}) \leq -n^{\frac{2\alpha_{h-\half}}{\xi_{h-\half}}} z\Bigr) \leq \frac{\beta'}{z^{\alpha_{h-\half} - 1}}.
    \end{align*}
\end{enumerate}
\end{lemma}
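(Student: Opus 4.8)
The plan is to treat the min-player node at level $h-\half$ as a single instance of the non-stationary multi-arm bandit of Section~\ref{sec:mab} and then to invoke the min-player UCB analysis of Corollary~\ref{thm:ucb_min}, exactly paralleling how Lemma~\ref{lma:recursion} for the max player rests on Theorem~\ref{thm:ucb}. Concretely, each visit to $s^{(i)}_{h-\half}$ selects one of the $K$ actions $a \in [K]$ according to Algorithm~\ref{alg:ucb_min} (negative bonus, $\argmin$ selection), and the reward accrued on that pull combines the bounded i.i.d.\ immediate reward $R(s^{(i)}_{h-\half}, a)$ with the $\gamma$-scaled value contribution propagated up from the child node $s^{(i)}_{h-\half}\circ a$ at level $h$. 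By the recursion~\eqref{eq:value_}, the limiting per-pull mean of arm $a$ is $\tilde q_{h-\half}^\star(s^{(i)}_{h-\half}, a)$, so the optimal (here \emph{minimal}) mean is $\mu_{\min}=\min_{a\in[K]}\tilde q_{h-\half}^\star(s^{(i)}_{h-\half},a)=\tilde v_{h-\half}^\star(s^{(i)}_{h-\half})$ and the relevant gap is $\Delta_{\min}=\Delta_{h-\half}(s^{(i)}_{h-\half})$ from~\eqref{eq:best2sec}.

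First I would discharge the two bandit hypotheses for this combined reward. Boundedness (Assumption~\ref{asp:bounded}) holds with $R=R^{\max}_{h-\half}$, the magnitude bound for rewards at level $h-\half$. For the convergence-and-concentration property (Assumption~\ref{asp:process}) I would appeal to the inductive hypothesis, namely that the cumulative child value $\tilde v_h^{(\cdot)}(\cdot)$ satisfies the convergence~\eqref{eq:ucb_convergence} and concentration~\eqref{eq:ucb_concentration} guarantees with parameters $\alpha_h,\xi_h$ and a sufficiently large $\beta_h$, and then apply Lemma~\ref{lma:prelim} to conclude that the sum of the bounded i.i.d.\ reward and the $\gamma$-scaled child value again obeys the required polynomial tail bounds. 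The discount factor enters only as a multiplicative rescaling of the concentration constant, so it is absorbed into the ``appropriately large'' choice of $\beta_h$; this is the content of the remark that ``the only difference is the $\gamma$ factor, which does not affect our conclusion.''

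With both assumptions in force, I would apply Corollary~\ref{thm:ucb_min} to the pull-reward sequence. Its convergence bound~\eqref{eq:ucb_convergence_} gives part~1 after the substitutions $R\mapsto R^{\max}_{h-\half}$, $\mu_{\min}\mapsto\tilde v_{h-\half}^\star(s^{(i)}_{h-\half})$, and $\Delta_{\min}\mapsto\Delta_{h-\half}(s^{(i)}_{h-\half})$, identifying $\EE[\bar X_n]$ with $\frac1n\EE[\tilde v_{h-\half}^{(n)}(s^{(i)}_{h-\half})]$; its two-sided concentration bound~\eqref{eq:ucb_concentration_} gives part~2 with the stated exponent $\alpha_{h-\half}-1$ and a single finite constant $\beta'$. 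Alternating this lemma with Lemma~\ref{lma:recursion} then propagates the convergence and concentration properties from level $H-\half$ down to the root over all $h\in\{\half,1,\dots,H-\half\}$, closing the induction set up in Section~\ref{sec:recursion}.

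The hard part is not the final substitution but the verification step, because the child value $\tilde v_h^{(\cdot)}(\cdot)$ is not an i.i.d.\ reward but the output of a recursively defined, non-stationary UCB process; showing that the composite pull-reward still satisfies the polynomial tails of Assumption~\ref{asp:process} hinges entirely on Lemma~\ref{lma:prelim} together with the levelwise constants $\beta_h$ being chosen large enough. Tracking how $R^{\max}_h$ and $\beta_h$ compound across levels so that one finite $\beta'$ suffices is the delicate bookkeeping; by contrast, the switch from $\max$ to $\min$ is cosmetic once the negated-bonus Corollary~\ref{thm:ucb_min} is available.
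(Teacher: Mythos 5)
Your proposal matches the paper's own argument essentially step for step: each visit to the min-player node is modeled as a pull in the non-stationary MAB of Section~\ref{sec:mab}, the composite reward $R(s^{(i)}_{h-\half},a)+\gamma\,\tilde v_h(\cdot)$ is shown to satisfy Assumptions~\ref{asp:bounded} and~\ref{asp:process} by combining the inductive hypothesis on level $h$ with Lemma~\ref{lma:prelim} (the $\gamma$ factor being absorbed into the constants), and the min-player UCB guarantee is then applied with $\mu_{\min}=\tilde v^\star_{h-\half}(s^{(i)}_{h-\half})$ and $\Delta_{\min}=\Delta_{h-\half}(s^{(i)}_{h-\half})$. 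If anything, your citation of Corollary~\ref{thm:ucb_min} is the more apt reference than the paper's own invocation of Theorem~\ref{thm:ucb} at this point, which relies implicitly on the sign-flip equivalence between the two.
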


\subsection{Error Analysis for Value Function Iteration}
We now move to the second part of the proof. The value function iteration improves the estimation of optimal value function by iterating Bellman equation. In effect, the MCTS tree is ``unrolling'' $2H$ steps of such an iteration. Specifically, let \(V(\cdot)\) denote the value function before an iteration, and let \(V'(\cdot)\) be the value function after iteration.
By definition, it holds for any $s \in \cS$, 
\begin{align}
    \label{eq:vi}
    V'(s) &= \max_{a \in [K]} \Bigl(\EE\bigl[R(s, a)\bigr] +  V(s \circ a)\Bigr).
\end{align}
Recall that value iteration is contractive with respect to $\|\cdot\|_\infty$ norm \citep{bertsekas2012dynamic}. That is, for any $h \geq 0$, 
\begin{align}
    \label{eq:contractive}
    \|V' - V^\star\|_\infty & \leq \gamma \cdot \|V - V^\star\|_\infty.
\end{align}
Since we can flipping the sign of \(R(\cdot)\), \(V(\cdot)\) and \(V'(\cdot)\) without breaking the contractive nature, the following iteration is also contractive with respect to \(\norm{\cdot}_\infty\) norm,
\begin{align}
    \label{eq:vi_}
    V'(s) &= \min_{a \in [K]} \Bigl(\EE\bigl[R(s, a)\bigr] + \gamma V(s \circ a)\Bigr).
\end{align}
Thus, \eqref{eq:contractive} still applies to the iteration corresponding to \eqref{eq:vi_}.
By drawing connection between \eqref{eq:value} and \eqref{eq:vi} as well as \eqref{eq:value_} and \eqref{eq:vi_}, we conclude the following lemma.
\begin{lemma}
\label{lma:vi}
The mean reward $\tilde v_0^\star(s_0)$ collected under the MCTS policy at root note $s_0$, starting with input value function proxy $V$ is such that 
\begin{align}
    \bigl|\tilde v_0^\star(s_0) - V^\star(s_0)\bigr| & \leq \gamma^H \|V - V^\star\|_\infty. 
\end{align}
\end{lemma}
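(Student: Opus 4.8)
The plan is to read the backward recursion \eqref{eq:value}--\eqref{eq:value_} that defines $\tilde v_0^\star(s_0)$ as exactly $H$ sweeps of the (pruned) minimax Bellman operator applied to the input proxy $V=\tilde v_H^\star$, and then to close the bound using the $\gamma$-contractivity of that operator together with the fact that $V^\star$ is its fixed point. Concretely, I would introduce the min backup and max backup
\begin{align*}
    (\mathcal{T}_{\min} W)(s) = \min_{b \in [K]}\bigl(\EE[R(s,b)] + \gamma\, W(s\circ b)\bigr),\qquad (\mathcal{T}_{\max} W)(s) = \max_{a \in [K]}\bigl(\EE[R(s,a)] + W(s\circ a)\bigr),
\end{align*}
and set $\mathcal{T}=\mathcal{T}_{\max}\circ\mathcal{T}_{\min}$. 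Matching these against \eqref{eq:value_} and \eqref{eq:value} gives $\tilde v_{h+\half}^\star=\mathcal{T}_{\min}\tilde v_{h+1}^\star$ and $\tilde v_h^\star=\mathcal{T}_{\max}\tilde v_{h+\half}^\star$, hence $\tilde v_h^\star=\mathcal{T}\tilde v_{h+1}^\star$; unrolling from the leaf level down to the root yields $\tilde v_0^\star(s_0)=(\mathcal{T}^H V)(s_0)$, where the only dependence on $V$ is through the level-$H$ leaf states at which $\tilde v_H^\star=V$ by construction.

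Next I would record the Lipschitz constants. Using $|\min_b f(b)-\min_b g(b)|\le\max_b|f(b)-g(b)|$ and its max analogue, $\mathcal{T}_{\min}$ is $\gamma$-contractive in $\norm{\cdot}_\infty$ while $\mathcal{T}_{\max}$ is merely non-expansive, so the composite full step $\mathcal{T}$ is $\gamma$-contractive, which is the content of \eqref{eq:contractive}, the sign-flip remark around \eqref{eq:vi_} being what lets the same reasoning cover the min backup. Because $V^\star$ (the fixed point of the minimax operator over the pruned action space, matching \eqref{eq:V_star_LLM}) satisfies $\mathcal{T}V^\star=V^\star$ and hence $\mathcal{T}^H V^\star=V^\star$, applying the contraction $H$ times gives
\begin{align*}
    \bigl|\tilde v_0^\star(s_0)-V^\star(s_0)\bigr| = \bigl|(\mathcal{T}^H V)(s_0)-(\mathcal{T}^H V^\star)(s_0)\bigr| \le \norm{\mathcal{T}^H V - \mathcal{T}^H V^\star}_\infty \le \gamma^H\norm{V-V^\star}_\infty,
\end{align*}
which is the claim.

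The work here is essentially bookkeeping rather than a single hard estimate, and the two places to be careful are exactly where the paper's shorthand hides content. First, the discount sits only on the min side, so it is the \emph{composition} of a max and a min backup that contracts by $\gamma$, not a single half-step; this is why $H$ full steps produce $\gamma^H$ and not a $\gamma^{2H}$ or half-power factor. Second, one must be explicit that the $V^\star$ toward which $\mathcal{T}$ contracts is the fixed point of the pruned-action operator (i.e.\ $\tilde V^\star$ of \eqref{eq:V_star_LLM}), since otherwise the identity $\mathcal{T}V^\star=V^\star$ fails and the contraction step is unjustified; verifying $\mathcal{T}\tilde V^\star=\tilde V^\star$ is a one-line manipulation that pulls the $b$-independent reward $\EE[R(s,a)]$ outside the inner $\min_b$ to recover the joint $\max_a\min_b$ form of \eqref{eq:V_star_LLM}.
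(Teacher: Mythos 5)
Your proof is correct and follows essentially the same route as the paper's: the paper likewise identifies the backward recursion \eqref{eq:value}--\eqref{eq:value_} with value iteration (\eqref{eq:vi}, \eqref{eq:vi_}) and invokes $\gamma$-contractivity in $\|\cdot\|_\infty$, with the input proxy as the initial point and the optimal value as the fixed point, to conclude after $H$ full (max--min) steps. Your two cautionary remarks are precisely the places where you are more careful than the paper, which attaches the $\gamma$-contraction \eqref{eq:contractive} to the undiscounted max backup \eqref{eq:vi} (only non-expansive on its own) and writes $V^\star$ where the fixed point of the pruned-action operator, i.e.\ $\tilde V^\star$ of \eqref{eq:V_star_LLM}, is the object that actually makes the contraction argument go through.
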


\subsection{Completing Proof of Theorem~\ref{thm:main}}

In summary, using Lemma \ref{lma:leaf} and \ref{lma:recursion}, we conclude that the recursive relationship going from level $h-\half$ to $h-1$ and from level \(h\) to \(h-\half\) holds for all $h \in [H]$ with level $0$ being the root. 
We set \(\alpha_h = \frac{\alpha_{h+\half}}{4}\) and \(\xi_h = \alpha_{h+\half}\).
At root $s^{(0)}$, the query state that is input to the MCTS policy, we have that after $n$ total simulations of MCTS, the empirical average of the rewards over these $n$ trial, $\frac{1}{n} v^{(n)}_0(s_0)$ is such that using the fact that
\begin{align}
    \label{eq.final.1}
    \biggl|\frac{1}{n}\EE\bigl[v^{(n)}_0(s_0)\bigr] - \tilde v_0^\star(s_0)\biggr| 
    &= \cO\bigl(n^{\sfrac{2\alpha_0}{\xi_0} - 1}\bigr) 
    = \cO(\sfrac{1}{\sqrt n}), 
\end{align}
where the last equality follows from $\alpha_0 = \xi_0 / 4$.
By Lemma \ref{lma:vi}, it holds that
\begin{align}\label{eq.final.2}
    \bigl|\tilde v_0(s_0) - V^\star(s_0)\bigr| & \leq \gamma^H \varepsilon_0,
\end{align}
since $\varepsilon_0 = \|\hat{V} - V^\star\|_\infty$. Combining \eqref{eq.final.1} and \eqref{eq.final.2}, it then holds that
\begin{align}\label{eq.final.3}
    \biggl|\frac{1}{n}\EE\bigl[v^{(n)}_0(s_0)\bigr] - V^\star(s_0)\biggr| & \leq \gamma^H \varepsilon_0 + \cO(\sfrac{1}{\sqrt{n}}).
\end{align}
This concludes the proof of Theorem \ref{thm:main}.

\section{Pruning Analysis}

\subsection{Proof of the property of LSE}
\label{sec:proof_lse}
\begin{proof}
    [Proof of Lemma \ref{lma:lse}]
    Denote by $x^\star = \operatornamewithlimits{argmax}_{x\in\mathcal{X}}f(x)$. For any $\tau\ge0$, we have
\begin{align}
    &\lse(f,\mathcal{X},\tau) - \max_{x\in\cX}f(x)\notag\\
    &\quad= \frac{1}{\tau} \cdot\log \Bigl(\frac{1}{|\mathcal{X}|}\cdot \sum_{x\in\mathcal{X}} \exp(\tau \cdot f(x))\Bigr)-f(x^\star)\notag\\
    &\quad\ge \frac{1}{\tau}\cdot\log\Bigl(\frac{1}{|\mathcal{X}|}\cdot\exp(\tau\cdot f(x^\star))\Bigr)-f(x^\star)\notag\\
    &\quad= \frac{-\log|\mathcal{X}|}{\tau},\label{eq:1334}
\end{align}
where the first equality uses the definition of $x^\star$ and the first inequality uses the fact that $\exp(\tau\cdot f(x))\ge 0$ for any $x\in\cX$ and $\tau\ge0$.

Note that
\begin{align}
    &\max_{x\in\cX}f(x) - \lse(f,\cX,\tau)\\
    &\quad=\frac{1}{\tau} \cdot \log \Bigl(\frac{1}{|\mathcal{X}|}\cdot \sum_{x\in\mathcal{X}} \exp(\tau \cdot f(x))\Bigr)-f(x^\star)\notag\\
    &\quad\le \frac{1}{\tau}\cdot\log\Bigl(\exp(\tau\cdot f(x^\star))\Bigr)-f(x^\star)\notag\\
    &\quad=0\label{eq:1335}
\end{align}
where we use the definition of $x^\star$ and the fact that $\tau\ge0$.

Hence, we combine \eqref{eq:1334} and \eqref{eq:1335} to have
\begin{align}
    |\lse(f,\cX,\tau)-\max_{x\in\cX}f(x)|\le \frac{\log|\cX|}{\tau}, 
\end{align}
for any $\tau\ge0$. Thus, we finish the proof of Lemma \ref{lem:lse}. 
\end{proof}

\subsection{Proof of the pruning error bound}
\begin{proof}
    [Proof of Proposition \ref{prop:V_star_llm}]
     \label{pf:prop:V_star_llm}   By the notion of half-horizon, we mirror \eqref{eq:V_half_star}  to define
\begin{align}
    \tilde{V}_{\half}^\star(s) = \min_{a\in\tilde{\cA}} \Bigl(\EE\bigl[R(s, a)\bigr] + \gamma\cdot \Tilde{V}^\star(s \circ a)\Bigr).\label{eq:V_half_star_LLM}
\end{align}
Combining \eqref{eq:V_star_LLM}, \eqref{eq:V_half_star_LLM}, and the fact that $r(s, a, b) = R(s, a) + R(s \circ a, b)$, we have
\begin{align}
    \Tilde{V}^\star(s)= \max_{a \in \tilde{\cA}} \Bigl(\EE \bigl[R(s, a)\bigr]+ \Tilde{V}^\star_{\half}(s \circ a)\Bigr).\label{eq:V_half_star_LLM2}
    \end{align}

For the notational simplicity, we define $\hat \epsilon(\tau)$ as
\begin{align}
    \hat \epsilon(\tau) = \max_{j\in\{0,1/2\}} \max_{s\in\cS}\Bigl| \Bigl(\max_{a\in\cA}-\max_{a\in\tilde{\cA}}\Bigr)\bigl(Q_j^\star(s,a)\bigr)\Bigr|,\label{eq:eps_hat}
\end{align}
where we define $V_{0}^\star$ as $V^\star$, $\max_{x\in\mathcal{X}_1}f(x) -\max_{x\in\mathcal{X}_2}f(x)  $ as $(\max_{x\in\mathcal{X}_1}-\max_{x\in\mathcal{X}_2}) (f(x)) $, and $Q_j^{\star}(s,a)$ as
\begin{align}
    Q_j^{\star}(s,a)=\EE \bigl[(-1)^{2j}\cdot R(s, a)\bigr]+ (-\gamma)^{2j}\cdot V_{j}^\star(s \circ a)
\end{align}
for any $j\in\{0,1/2\}$. 

By the definition of $\hat \epsilon(\tau)$ in \eqref{eq:eps_hat}, we use triangle inequality to have
\begin{align}
    \hat \epsilon(\tau) &\le \max_{j\in\{0,1/2\}} \max_{s\in\cS}\Bigl| \max_{a\in\cA}Q_j^\star(s,a) - \lse(Q^\star_j(s,\cdot),\cA,\tau)\Bigr|\notag\\
    &\qquad +\max_{j\in\{0,1/2\}} \max_{s\in\cS}\Bigl| \lse(Q^\star_j(s,\cdot),\cA,\tau)-\lse(Q^\star_j(s,\cdot),\tilde{\cA},\tau)\Bigr|\notag\\&\qquad+ \max_{j\in\{0,1/2\}} \max_{s\in\cS}\Bigl| \max_{a\in\tilde{\cA}}Q_j^\star(s,a) - \lse(Q^\star_j(s,\cdot),\tilde{\cA},\tau)\Bigr|.\label{eq:Q_j2}
\end{align}
Invoking Lemma \ref{lem:lse}, we have
\begin{align}
    &\max_{j\in\{0,1/2\}} \max_{s\in\cS}\Bigl| \max_{a\in\cA}Q_j^\star(s,a) - \lse(Q^\star_j(s,\cdot),\cA,\tau)\Bigr| + \max_{j\in\{0,1/2\}} \max_{s\in\cS}\Bigl| \max_{a\in\tilde{\cA}}Q_j^\star(s,a) - \lse(Q^\star_j(s,\cdot),\tilde{\cA},\tau)\Bigr|\notag\\
    &\qquad\le 2\frac{\log(|\cA||\tilde{\cA}|)}{\tau}.\label{eq:Q_j3}
\end{align}
for any $\tau\ge 0$. 
Plugging \eqref{eq:Q_j2} and \eqref{eq:Q_j3} into \eqref{eq:eps_hat}, we bound $\hat{\epsilon}(\tau)$ as follows,
\begin{align}
    \hat{\epsilon}(\tau)&\le \frac{2\log(|\cA||\tilde{\cA}|)}{\tau}+\max_{j\in\{0,1/2\}} \max_{s\in\cS}\Bigl| \lse(Q^\star_j(s,\cdot),\cA,\tau)-\lse(Q^\star_j(s,\cdot),\tilde{\cA},\tau)\Bigr|\notag\\
    &\le \frac{2\log(|\cA||\tilde{\cA}|)}{\tau} +\epsilon(\tau)\label{eq:bound_eps_hat}
\end{align}
for any $\tau\ge0$. Here, the inequality uses the definition of $\epsilon(\tau)$ in Definition \ref{def:llm_approx}. 
By \eqref{eq:V_half_star2}, we have
\begin{align}
    |V^\star(s)-\Tilde{V}^\star(s)|&=\Bigl|\max_{a \in \cA} \Bigl(\EE \bigl[R(s, a)\bigr]+ V^\star_{\half}(s \circ a)\Bigr)\notag-\max_{a \in \tilde{\cA}} \Bigl(\EE \bigl[R(s, a)\bigr]
    + \Tilde{V}^\star_{\half}(s \circ a)\Bigr)\Bigr|\notag\\
    &\le \Bigl|\max_{a \in \tilde{\cA}} \Bigl(\EE \bigl[R(s, a)\bigr]+ V^\star_{\half}(s \circ a)\Bigr)\notag -\max_{a \in \tilde{\cA}} \Bigl(\EE \bigl[R(s, a)\bigr]
    + \Tilde{V}^\star_{\half}(s \circ a)\Bigr)\Bigr|+\hat {\epsilon}(\tau)\notag\\
    &\le \max_{a \in \tilde{\cA}} \Bigl|\EE \bigl[R(s, a)\bigr]+ V^\star_{\half}(s \circ a)\notag-\EE \bigl[R(s, a)\bigr]
    - \Tilde{V}^\star_{\half}(s \circ a)\Bigr| +\hat {\epsilon}(\tau)\notag\\
    &\le \max_{s\in\cS} \Bigl| V^\star_{\half}(s)
    - \Tilde{V}^\star_{\half}(s)\Bigr| +\hat {\epsilon}(\tau),\label{eq:v_star_eq_1}
\end{align}
for any $\tau\ge0$ and $s\in\cS$. Here, the first inequality uses the definition of $\hat {\epsilon}(\tau)$ in Definition \ref{def:llm_approx} and the triangle inequality, the second inequality uses the contraction property of $\max$ operator, and the last inequality uses the fact that $s\circ a\in\cS$. Take the maximum for $s\in\mathcal{S}$ on the left-hand side of \eqref{eq:v_star_eq_1}, we obtain
\begin{align}
    \max_{s\in\cS}|V^\star(s)-\Tilde{V}^\star(s)|\le \max_{s\in\cS} \Bigl| V^\star_{\half}(s)
    - \Tilde{V}^\star_{\half}(s)\Bigr| +\hat {\epsilon}(\tau),\label{eq:111}
\end{align}
for any $\tau\ge 0$. 
Emulating the similar proof, we have
\begin{align}
     |V_{\half}^\star(s)-\Tilde{V}_{\half}^\star(s)|&=\Bigl|\min_{a \in \cA} \Bigl(\EE \bigl[R(s, a)\bigr]+ \gamma\cdot V^\star(s \circ a)\Bigr)\notag-\min_{a \in \tilde{\cA}} \Bigl(\EE \bigl[R(s, a)\bigr]
    + \gamma\cdot \Tilde{V}^\star(s \circ a)\Bigr)\Bigr|\notag\\
    &= \Bigl|\max_{a \in \tilde{\cA}} \Bigl(\EE \bigl[-R(s, a)\bigr]- \gamma\cdot V^\star(s \circ a)\Bigr)\notag -\max_{a \in \tilde{\cA}} \Bigl(\EE \bigl[-R(s, a)\bigr]
    - \gamma\cdot\Tilde{V}^\star(s \circ a)\Bigr)\Bigr|\notag\\
    &\le \Bigl|\max_{a \in \tilde{\cA}} \Bigl(\EE \bigl[-R(s, a)\bigr]- \gamma\cdot V^\star(s \circ a)\Bigr)\notag -\max_{a \in \tilde{\cA}} \Bigl(\EE \bigl[-R(s, a)\bigr]
    - \gamma\cdot\Tilde{V}^\star(s \circ a)\Bigr)\Bigr|+\hat {\epsilon}(\tau)\notag\\
    &\le \max_{a \in \tilde{\cA}} \Bigl|\EE \bigl[-R(s, a)\bigr]- \gamma\cdot V^\star(s \circ a)\notag+\EE \bigl[R(s, a)\bigr]
    +\gamma\cdot \Tilde{V}^\star(s \circ a)\Bigr| +\hat {\epsilon}(\tau)\notag\\
    &\le \gamma\cdot\max_{s\in\cS} \Bigl| V^\star(s)
    - \Tilde{V}^\star(s)\Bigr| +\hat {\epsilon}(\tau),\label{eq:v_star_eq_2}
\end{align}
for any $\tau\ge0$ and $s\in\cS$. Here, the first inequality uses the definition of $\hat {\epsilon}(\tau)$ in Definition \ref{def:llm_approx} and the triangle inequality, the second inequality uses the contraction property of $\max$ operator, and the last inequality uses the fact that $s\circ a\in\cS$. Take the maximum for $s\in\mathcal{S}$ on the left-hand side of \eqref{eq:v_star_eq_2}, we obtain
\begin{align}
    \max_{s\in\cS}|V_{\half}^\star(s)-\Tilde{V}_{\half}^\star(s)|\le \gamma\cdot\max_{s\in\cS} \Bigl| V^\star_{\half}(s)
    - \Tilde{V}^\star_{\half}(s)\Bigr| +\hat {\epsilon}(\tau),\label{eq:112}
\end{align}
for any $\tau\ge 0$. Combining \eqref{eq:111} and \eqref{eq:112}, we have that 
\begin{align*}
    \max_{s\in\cS}|V^\star(s)-\Tilde{V}^\star(s)| &\le \frac{2}{1-\gamma}\cdot \hat{\epsilon}(\tau)\notag\\
    &\le  \frac{2}{1-\gamma}\cdot \Bigl({\epsilon}(\tau)+ \frac{2\log(|\cA||\tilde{\cA}|)}{\tau}\Bigr),
\end{align*}
for any $\tau\ge 0$. Here, the last inequality uses \eqref{eq:bound_eps_hat}. Thus, we finish the proof of Proposition \ref{prop:V_star_llm}. 
\end{proof}

\section{Auxiliary Lemmas}

\begin{lemma}[The Azuma-Hoeffding's Inequality \citep{azuma1967weighted}]
\label{lma:hoeffding}
Let \(X_1, \dots, X_n\) be independent random variables such that \(X_i \in [a_i, b_i]\) almost surely. It then holds for any \(t > 0\) that
\begin{align*}
    \PP\Biggl(\sum_{i=1}^n X_i - \EE\biggl[\sum_{i=1}^n X_i\biggr] \ge t\Biggr) &\le \exp\Biggl(-\frac{2t^2}{\sum_{i=1}^n (b_i - a_i)^2}\Biggr),\\
    \PP\Biggl(\sum_{i=1}^n X_i - \EE\biggl[\sum_{i=1}^n X_i\biggr] \le -t\Biggr) &\le \exp\Biggl(-\frac{2t^2}{\sum_{i=1}^n (b_i - a_i)^2}\Biggr),
\end{align*}
\end{lemma}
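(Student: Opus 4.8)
The plan is to prove the upper-tail bound via the standard Chernoff exponential-moment method and then recover the lower tail by symmetry. Writing $S = \sum_{i=1}^n X_i$ and $Y_i = X_i - \EE[X_i]$, I would first observe that each centered variable $Y_i$ is mean-zero and supported on an interval of width $b_i - a_i$. For any free parameter $\lambda > 0$, Markov's inequality applied to the nonnegative random variable $\exp(\lambda(S - \EE[S]))$ gives $\PP(S - \EE[S] \ge t) \le e^{-\lambda t}\cdot\EE[\exp(\lambda \sum_i Y_i)]$. Since the $X_i$, and hence the $Y_i$, are independent, the exponential moment factorizes as $\prod_{i=1}^n \EE[\exp(\lambda Y_i)]$, which reduces the problem to controlling each single-variable factor.

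The crux is Hoeffding's lemma: for a mean-zero random variable $Y$ supported on $[a,b]$ and any $\lambda \in \RR$, one has $\EE[\exp(\lambda Y)] \le \exp(\lambda^2 (b-a)^2 / 8)$. I would establish this by convexity of $y \mapsto e^{\lambda y}$, dominating it on $[a,b]$ by the chord through the two endpoints, taking expectations using $\EE[Y] = 0$, and rewriting the resulting upper bound as $e^{\psi(u)}$ with $u = \lambda(b-a)$ and $\psi$ an explicit smooth function satisfying $\psi(0) = \psi'(0) = 0$. A short computation shows $\psi''(u)$ has the form $t(1-t)$ for some $t \in [0,1]$, hence $\psi''(u) \le 1/4$ uniformly, so a second-order Taylor expansion yields $\psi(u) \le u^2/8$, which is exactly the claimed sub-Gaussian control of a single factor.

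Combining these steps, the product bound gives $\PP(S - \EE[S] \ge t) \le \exp(-\lambda t + \lambda^2 \sum_{i=1}^n (b_i - a_i)^2 / 8)$ for every $\lambda > 0$. I would then minimize the exponent over $\lambda$, a quadratic whose minimizer is $\lambda^\star = 4t / \sum_{i=1}^n (b_i - a_i)^2$; substituting $\lambda^\star$ back produces the stated bound $\exp(-2t^2 / \sum_{i=1}^n (b_i - a_i)^2)$. The lower-tail inequality then follows immediately by applying the established upper-tail bound to $-X_1, \dots, -X_n$, which are independent, lie in intervals $[-b_i, -a_i]$ of the same widths, and satisfy $\sum_i (-X_i) - \EE[\sum_i (-X_i)] = -(S - \EE[S])$.

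The main obstacle is the single-variable moment-generating-function estimate (Hoeffding's lemma); the Chernoff argument and the final optimization over $\lambda$ are routine. The one delicate point there is verifying the uniform bound $\psi'' \le 1/4$, since that constant is precisely what produces the factor $2$ in the final exponent, and getting it right is what distinguishes the sharp constant from a looser sub-Gaussian estimate.
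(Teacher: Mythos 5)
Your proof is correct: the Chernoff step, the factorization over independent summands, Hoeffding's lemma via the convexity/chord argument with the uniform bound $\psi''(u)=t(1-t)\le \sfrac{1}{4}$, the optimization at $\lambda^\star = 4t/\sum_{i=1}^n (b_i-a_i)^2$ yielding the sharp constant $2$, and the lower tail by applying the result to $-X_1,\dots,-X_n$ all check out. The paper states this lemma as a cited classical result without proof, so there is no in-paper argument to compare against; yours is the standard and complete derivation (for independent variables this is Hoeffding's inequality proper, with no need for Azuma's martingale extension).
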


\begin{lemma}\label{lma:prelim}
Consider random variables $X_i, Y_i \in \RR$ for $i \in \NN_+$ such that $X_i$'s are independent and identically distributed taking values in $[-B, B]$ for some $B > 0$ and
$X_i$'s are independent of $Y_i$'s. Suppose there exists \(\mu_Y \in \RR\) such that it holds for any $n \in \NN_+$ that 
\begin{align*}
    \lim_{n \to \infty} \EE\Biggl[\frac1n \sum_{i=1}^n Y_i\Biggr] & = \mu_Y,
\end{align*}
and it holds for any $n \in \NN_+$ and $z \in [1, \infty)$ that
\begin{align*}
    &\PP\Biggl(\sum_{i=1}^n Y_i - n \mu_{Y} \geq n^{\eta} z\Biggr) \leq\frac{\beta}{z^{\xi}}, \quad
    \PP\Biggl(\sum_{i=1}^n Y_i - n \mu_{Y} \leq -n^{\eta} z\Biggr) \leq\frac{\beta}{z^{\xi}},
\end{align*}
where $\beta \in (1, \infty)$, $\xi \in (0, \infty)$, $\eta \in [\half, 1)$ are constants.
Let $Z_i = X_i + \rho Y_i$ for some $\rho > 0$, and let \(\mu_X = \EE[X_1]\) be the expectation of all \(X_i\)'s. Then,
    it holds for any $n \in \NN_+$ that
    \begin{align}
    \label{eq:prelim_concentration}
        \lim_{n\to\infty} \EE\Biggl[\frac1n \sum_{i=1}^n Z_i\Biggr] & = \mu_X + \rho \mu_Y. 
    \end{align}
    And it holds for any $n \in \NN_+$ and \(z \in [1, \infty)\) that
    \begin{align}
    \label{eq:prelim_convergence}
        &\PP\Biggl(\sum_{i=1}^n Z_i - n(\mu_{X} + \rho \mu_{Y}) \geq n^{\eta} z\Biggr) \leq\frac{\beta'}{z^{\xi}}, \quad 
        \PP\Biggl(\sum_{i=1}^n Z_i - n(\mu_{X} + \rho \mu_{Y}) \leq -n^{\eta} z\Biggr) \leq\frac{\beta'}{z^{\xi}},
    \end{align}
    where \(\beta' = (4\xi B^2 / e)^{\sfrac{\xi}{2}} + (2 \rho)^\xi \beta\).
\end{lemma}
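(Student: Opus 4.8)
The plan is to treat the two assertions separately after writing the centered partial sum of the $Z_i$ as the sum of an $X$-part and a $Y$-part:
\begin{align*}
    \sum_{i=1}^n Z_i - n(\mu_X + \rho\mu_Y) = \Bigl(\sum_{i=1}^n X_i - n\mu_X\Bigr) + \rho\Bigl(\sum_{i=1}^n Y_i - n\mu_Y\Bigr).
\end{align*}
The convergence claim \eqref{eq:prelim_concentration} is immediate from linearity: since the $X_i$ are i.i.d.\ with mean $\mu_X$ we have $\EE[\frac1n\sum_{i=1}^n X_i] = \mu_X$ for every $n$, so $\EE[\frac1n\sum_{i=1}^n Z_i] = \mu_X + \rho\,\EE[\frac1n\sum_{i=1}^n Y_i]\to \mu_X + \rho\mu_Y$ by the assumed convergence of the $Y$-averages.

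For the concentration claim \eqref{eq:prelim_convergence} I would bound the upper tail only, the lower tail being symmetric by the same argument. On the event that the centered sum exceeds $n^\eta z$, at least one of its two parts must exceed half of $n^\eta z$, so a union bound gives
\begin{align*}
    \PP\Bigl(\sum_{i=1}^n Z_i - n(\mu_X+\rho\mu_Y) \ge n^\eta z\Bigr)
    \le \PP\Bigl(\sum_{i=1}^n X_i - n\mu_X \ge \tfrac{1}{2} n^\eta z\Bigr)
    + \PP\Bigl(\sum_{i=1}^n Y_i - n\mu_Y \ge \tfrac{n^\eta z}{2\rho}\Bigr).
\end{align*}
The $Y$-term is handled by the assumed concentration applied at the rescaled threshold $w = z/(2\rho)$: when $w\ge 1$ it yields $\beta/w^\xi = (2\rho)^\xi\beta\,z^{-\xi}$ directly, and when $1\le z < 2\rho$ the hypothesis does not apply, but then $(2\rho)^\xi\beta\,z^{-\xi} > \beta > 1 \ge \PP(\cdot)$, so the same bound holds over all $z\in[1,\infty)$.

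For the $X$-term I would invoke Azuma--Hoeffding (Lemma \ref{lma:hoeffding}) with per-summand range $2B$, producing the sub-Gaussian bound $\exp(-n^{2\eta-1}z^2/(8B^2))$; since $\eta\ge\half$ gives $n^{2\eta-1}\ge 1$, this is at most $\exp(-z^2/(8B^2))$. The decisive step is converting this Gaussian tail into the required polynomial $z^{-\xi}$ tail, which I would do by maximizing $z^\xi\exp(-z^2/(8B^2))$ over $z>0$ (the maximizer is $z = 2B\sqrt{\xi}$), obtaining $\exp(-z^2/(8B^2)) \le (4\xi B^2/e)^{\xi/2}\,z^{-\xi}$. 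Adding the two contributions produces exactly $\beta' = (4\xi B^2/e)^{\xi/2} + (2\rho)^\xi\beta$.

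The main obstacle is this tail-conversion calculus step together with the subtle bookkeeping for the regime $1\le z<2\rho$, where the $Y$-concentration hypothesis (valid only at thresholds $\ge 1$) cannot be invoked and one must instead absorb the trivial bound $\PP(\cdot)\le 1$ into the constant $\beta'$; everything else reduces to the union bound and linearity of expectation.
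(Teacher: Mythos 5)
Your proposal is correct and follows essentially the same route as the paper's proof: the same split of $\sum_i Z_i - n(\mu_X+\rho\mu_Y)$ into the $X$-part and $Y$-part, the same union bound with thresholds $\tfrac{1}{2}n^\eta z$ and $\tfrac{n^\eta z}{2\rho}$, Azuma--Hoeffding for the bounded i.i.d.\ part, and the same conversion of the sub-Gaussian tail into a polynomial tail by maximizing $z^\xi\exp\bigl(-z^2/(8B^2)\bigr)$, yielding the identical constant $\beta' = (4\xi B^2/e)^{\sfrac{\xi}{2}} + (2\rho)^\xi\beta$. In fact you are slightly more careful than the paper on the regime $1\le z<2\rho$, where the $Y$-concentration hypothesis cannot be invoked at the rescaled threshold and one must fall back on $\PP(\cdot)\le 1 < \beta \le (2\rho/z)^\xi\beta$; the paper glosses over this point.
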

\begin{proof}
Following from Lemma \ref{lma:hoeffding} and the fact that $X_i$'s are i.i.d. bounded random variables taking value in $[-B, B]$, it holds for any $t \geq 0$ that
\begin{align*}
    &\PP\Bigg(\sum_{i=1}^n X_i - n \mu_X \geq n t \Bigg) \leq \exp\Bigg(-\frac{t^2 n}{2B^2}\Bigg),\\
    &\PP\Bigg(\sum_{i=1}^n X_i - n \mu_X \leq - n t \Bigg) \leq \exp\Bigg(-\frac{t^2 n}{2B^2}\Bigg). 
\end{align*}
Thus, it holds for a large enough constant \(\beta'\) that
\begin{align*}
    \PP\Bigg(\sum_{i=1}^n Z_i - n(\mu_{X} + \rho \mu_{Y}) \geq n^{\eta} z\Bigg)
    & \leq \PP\Bigg(\sum_{i=1}^n X_i - n\mu_{X} \geq \frac{n^{\eta}z}{2} \Bigg) + 
    \PP\Bigg(\sum_{i=1}^n Y_i - n\mu_{Y} \geq \frac{n^{\eta}z}{2\rho} \Bigg) \nonumber \\
    & \leq \exp\biggl(-\frac{z^2 n^{2\eta -1}}{8B^2}\biggr) + \biggl(\frac{2 \rho}{z}\biggr)^\xi \beta.
\end{align*}
To choose a proper value value \(\beta'\), note that
\begin{align*}
    z^\xi \exp\biggl(-\frac{z^2 n^{2\eta -1}}{8B^2}\biggr)
    \le z^\xi \exp\biggl(-\frac{z^2}{8B^2}\biggr)
    \le (4\xi B^2)^{\sfrac{\xi}{2}} \exp(-\sfrac{\xi}{2}),
\end{align*}
where the first inequality follows from the fact that \(n \in [1, \infty)\) and \(\eta \in [\half, 1)\), and the second inequality is obtained via treating the right-hand side as a function of \(z\) and finding the maximum of that function.
Then, we conclude the proof of the first equation in \eqref{eq:prelim_convergence} by setting
\begin{align*}
    \beta' = (4\xi B^2 / e)^{\sfrac{\xi}{2}} + (2 \rho)^\xi \beta.
\end{align*}
We can prove the second equation by the same reasoning.
\end{proof}

\begin{lemma}
    [Maximum magnitude]
    \label{lma:magnitude}
    Denote by \(R^{\max}_h\) the maximum magnitude of the rewards collected by all the nodes in level \(h \in \{0, \half, 1, \dots, H\}\).
    It then holds for any \(h \in [H-1]\) that
    \begin{align}
        \label{eq:mag_leaf}
        &R^{\max}_{H-\half} = R^{\max} + \gamma(V^{\max} + \epsilon_0)\\
        \label{eq:mag_min}
        &R^{\max}_h = R^{\max} + R^{\max}_{h+\half}\\
        \label{eq:mag_max}
        &R^{\max}_{h-\half} = R^{\max} + \gamma R^{\max}_h.
    \end{align}
\end{lemma}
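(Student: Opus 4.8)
The plan is to establish the three relations \eqref{eq:mag_leaf}, \eqref{eq:mag_min}, and \eqref{eq:mag_max} by tracking, level by level from the leaves toward the root, the magnitude of the reward that a single node contributes to its parent's multi-armed-bandit instance upon one visit, and then bounding this magnitude by the triangle inequality. The key structural observation is that the per-visit reward inherits exactly the additive form dictated by the backward update of Algorithm \ref{alg:mcts}: it is a single-step reward $R(s,a)$—bounded in magnitude by $R^{\max} := \sup_{s,a}\bigl|R(s,a)\bigr|$, which is finite under the bounded-reward assumption (Assumption \ref{asp:bounded})—plus the single-rollout return propagated up from the child, with the discount factor $\gamma$ attached \emph{only} when the transition is made by the min player. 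All three statements are read as the upper bounds produced by this triangle-inequality argument, which is all that is needed when $R^{\max}_h$ is subsequently fed into the convergence rates of Lemmas \ref{lma:recursion} and \ref{lma:recursion_}.

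First I would treat the base case \eqref{eq:mag_leaf} at level $H-\half$. Here the min player selects an action $a$ and reaches a leaf $s\circ a$ whose value is supplied by the proxy $\hat V$, so the reward collected on a single visit is $R(s,a) + \gamma\,\hat V(s\circ a)$. Writing $\hat V = V^\star + (\hat V - V^\star)$ and applying the triangle inequality gives $\bigl|\hat V(s\circ a)\bigr| \le \|V^\star\|_\infty + \norm{\hat V - V^\star}_\infty \le V^{\max} + \epsilon_0$, where $V^{\max} := \|V^\star\|_\infty$ and the final bound is Assumption \ref{asp:critic}. Combining the two terms yields the magnitude bound $R^{\max} + \gamma(V^{\max} + \epsilon_0)$, which is precisely \eqref{eq:mag_leaf}.

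Next I would carry out the two recursive steps. For a max-player node at level $h$ with $h\in[H-1]$, a single visit produces the reward $R(s,a) + \tilde v_{h+\half}(s\circ a)$ with \emph{no} discount, matching the update $\tilde q_h(s,a)\leftarrow r_h + \tilde v_{h+\half}(s\circ a)$ in Algorithm \ref{alg:mcts}; since the single-rollout return from the child has magnitude at most $R^{\max}_{h+\half}$ by the definition of that quantity, the triangle inequality delivers \eqref{eq:mag_min}. For a min-player node at level $h-\half$, the analogous per-visit reward is $R(s,a) + \gamma\,\tilde v_{h}(s\circ a)$, where the $\gamma$ encodes the discount applied only to the min player's transition; bounding the child term by $\gamma R^{\max}_h$ gives \eqref{eq:mag_max}. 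Unrolling these two recursions upward from level $H-\half$ then controls $R^{\max}_h$ at every level $h\in\{0,\half,1,\dots,H\}$.

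The argument is essentially bookkeeping, so no single step is a genuine obstacle; the one point demanding care is the asymmetric placement of the discount factor $\gamma$—present for the min player's half-step transition but absent for the max player's—which must be kept consistent with the half-step convention in \eqref{eq:V_half_star2}--\eqref{eq:V_half_star} and with the backward update of Algorithm \ref{alg:mcts}. A secondary subtlety is that the objects being bounded are the \emph{per-visit} (single-rollout) rewards fed into the parent bandit, not the accumulated totals $\tilde v^{(n)}$; one must therefore invoke the definition of $R^{\max}_{h+\half}$ as a single-visit magnitude bound, exactly as it is used when applying Theorem \ref{thm:ucb} and Corollary \ref{thm:ucb_min} throughout the recursion of Section \ref{sec:recursion}.
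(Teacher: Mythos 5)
Your proof is correct and follows essentially the same route as the paper's: bound the leaf-level per-visit reward via the proxy error (Assumption \ref{asp:critic}) to get \eqref{eq:mag_leaf}, then recurse upward with the triangle inequality, attaching $\gamma$ only on the min player's half-steps. The single cosmetic difference is that you define $V^{\max} := \|V^\star\|_\infty$ abstractly, whereas the paper makes this concrete by the geometric-series bound $\|V^\star\|_\infty \le 2R^{\max}/(1-\gamma) = V^{\max}$; your version is fine provided you note that finiteness of $\|V^\star\|_\infty$ follows from the bounded rewards.
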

\begin{proof}
    Since all the rewards are bounded by \(R^{\max}\). Then it follows from the definition of \(V^\star\) and \(V^\star_{\half}\) in \eqref{eq:V_half_star} and \eqref{eq:V_half_star2} that
    \begin{align*}
        &V^\star(\cdot) 
        \le 2R^{\max} + \gamma \cdot \bigl(2R^{\max} + \gamma \cdot (2R^{\max} + \cdots)\bigr)
        = 2R^{\max} \cdot \Biggl(\sum_{i=0}^\infty \gamma^i\Biggr)
        \le \frac{2R^{\max}}{1 - \gamma}
        = V^{\max},\\
        &V^\star_{\half}(\cdot) 
        \le R^{\max} + \gamma \cdot \bigl(2R^{\max} + \gamma \cdot (2R^{\max} + \cdots)\bigr)
        \le 2R^{\max} \cdot \Biggl(\sum_{i=0}^\infty \gamma^i\Biggr)
        \le \frac{2R^{\max}}{1 - \gamma}
        = V^{\max}.
    \end{align*}
    The rewards collected for the node \(s\) and action \(a\) at level \(H - \half\) is \(R(s, a) + \gamma \tilde v_H^\star(s \circ a)\). Since the \(\tilde v_H^\star\) is provided by the input value function proxy \(V\) of Algorithm \ref{alg:ucb} and \(\norm{V - V^\star} \le \epsilon_0\), it holds that
    \begin{align*}
        \bigl|R(s, a) + \gamma \tilde v_H^\star(s \circ a)\bigr|
        &\le |R(s, a)| + \gamma \cdot \bigl(|V^{\max}(s \circ a)| + |\tilde v_H^\star(s \circ a) - V^{\max}(s \circ a)| + \bigr)\\
        &\le R^{\max} + \gamma (V^{\max} + \epsilon_0).
    \end{align*}
    For any \(h \in [H - 1]\), the rewards collected for the node \(s\) and action \(a\) at level \(h\) is
    \begin{align*}
        \bigl|R(s, a) + \tilde v_{h+\half}^\star(s \circ a)\bigr| 
        \le \bigl|R(s, a)\bigr| + \bigl|\tilde v_{h+\half}^\star(s \circ a)\bigr|
        \le R^{\max} + R^{\max}_{h+\half},
    \end{align*}
    and for level \(h - \half\),
    \begin{align*}
        \bigl|R(s, a) + \gamma \tilde v_h(s \circ a)\bigr| 
        \le \bigl|R(s, a)\bigr| + \gamma\bigl|\tilde v_h(s \circ a)\bigr|
        \le R^{\max} + \gamma R^{\max}_h,
    \end{align*}
    which concludes the proof of Lemma \ref{lma:magnitude}.
\end{proof}

\end{document}